\newcommand*{\argmin}{\mathop{\mathrm{argmin}}}
\newcommand*{\argmax}{\mathop{\mathrm{argmax}}}
\newcommand{\Acal}{\mathcal{A}}
\newcommand{\Bcal}{\mathcal{B}}
\newcommand{\Ncal}{\mathcal{N}}
\newcommand{\Xcal}{\mathcal{X}}
\newcommand{\Ycal}{\mathcal{Y}}
\newcommand{\Pbb}{\mathbb{P}}
\newcommand{\Rbb}{\mathbb{R}}
\newcommand{\BlackBox}{\rule{1.5ex}{1.5ex}}  %
\def\QED{~\rule[-1pt]{5pt}{5pt}\par\medskip}
\newenvironment{proof}{\par\noindent{\em Proof:\ }}{\hfill\BlackBox\\[.0mm]}
\newtheorem{theorem}{Theorem}[section]
\newtheorem{lemma}{Lemma}[section]
\newtheorem{proposition}{Proposition}[section]
\newtheorem{remark}{Remark}
\newtheorem{definition}{Definition}[section]
\newtheorem{assumption}{Assumption}
\newcommand{\rbr}[1]{\left(#1\right)}
\newcommand{\sbr}[1]{\left[#1\right]}
\newcommand{\cbr}[1]{\left\{#1\right\}}
\newcommand{\half}{\frac{1}{2}}
\newcommand{\supp}{\mathrm{supp}}
\title{A Theory of Label Propagation\\for Subpopulation Shift}
\author{Tianle Cai \thanks{Princeton University.} \thanks{Zhongguancun Haihua Institute for Frontier Information Technology} \thanks{Alphabetical order.}\and Ruiqi Gao$^{*\dagger\ddagger}$ \and Jason D.~Lee$^{*\ddagger}$ \and Qi Lei$^{*\ddagger}$}
\begin{document}
\maketitle
\begin{abstract}
One of the central problems in machine learning is domain adaptation. Unlike past theoretical work, we consider a new model for subpopulation shift in the input or representation space. In this work, we propose a provably effective framework for domain adaptation based on label propagation. In our analysis, we use a simple but realistic “expansion” assumption, proposed in \citet{wei2021theoretical}. Using a teacher classifier trained on the source domain, our algorithm not only propagates to the target domain but also improves upon the teacher. By leveraging existing generalization bounds, we also obtain end-to-end finite-sample guarantees on the entire algorithm. In addition, we extend our theoretical framework to a more general setting of source-to-target transfer based on a third unlabeled dataset, which can be easily applied in various learning scenarios. Inspired by our theory, we adapt consistency-based semi-supervised learning methods to domain adaptation settings and gain significant improvements.
\end{abstract}

\section{Introduction}
\label{sec:intro}
The recent success of supervised deep learning is built upon two crucial cornerstones: That the training and test data are drawn from an \emph{identical distribution}, and that representative \emph{labeled} data are available for training. However, in real-world applications, labeled data drawn from the same distribution as test data are usually unavailable. Domain adaptation~\citep{quionero2009dataset,saenko2010adapting} suggests a way to overcome this challenge by transferring the knowledge of labeled data from a source domain to the target domain. 

Without further assumptions, the transferability of information is not possible. Existing theoretical works have investigated suitable assumptions that can provide learning guarantees. Many of the works are based on the \emph{covariate shift} assumption \citep{heckman1979sample,shimodaira2000improving}, which states that the conditional distribution of the labels (given the input $x$) is invariant across domains, i.e., $p_S\rbr{y|x} = p_T\rbr{y|x}$. Traditional approaches usually utilize this assumption by further assuming that the source domain covers the support of the target domain. In this setting, importance weighting \citep{shimodaira2000improving,cortes2010learning,cortes2015adaptation,zadrozny2004learning} can be used to transfer information from source to target with theoretical guarantees. However, the assumption of covered support rarely holds in practice.

In the seminal works of \citet{ben2010theory,ganin2016domain}, the authors introduced a theory that enables generalization to out-of-support samples via distribution matching. %
They showed that the risk on the target domain can be bounded by the sum of two terms a) the risk on the source domain plus a discrepancy between source and  target domains, and b) the optimal joint risk that a function in the hypothesis class can achieve. Inspired by this bound, numerous domain-adversarial algorithms aimed at matching the distribution of source and target domains in the feature space have been proposed \citep{ajakan2014domain,long2015learning,ganin2016domain}. These methods show encouraging empirical performance on transferring information from domains with different styles, e.g., from colorized photos to gray-scale photos. However, the theory of distribution matching can be violated since
only two terms in the bound are optimized in the algorithms while the other term can be arbitrary large \citep{zhao2019learning,wu2019domain,li2020rethinking}. In practice, forcing the representation distribution of two domains to match may also fail in some settings. As an example, \citet{li2020rethinking} gives empirical evidence of this failure on datasets with subpopulation shift. \citet{li2020rethinking} describes a classification task between vehicle and person; subpopulation shift happens when the source vehicle class contains $50\%$ car and $50\%$ motorcycle, while the target vehicle class contains $10\%$ car and $90\%$ motorcycle.

In real-world applications, subpopulation shift is pervasive, and often in a fine-grained manner. The source domain will inevitably fail to capture the diversity of the target domain, and models will encounter unseen subpopulations in the target domain, e.g., unexpected weather conditions for self-driving or different diagnostic setups in medical applications~\citep{santurkar2021breeds}. The lack of theoretical understanding of subpopulation shift motivates us to study the following question:
\begin{center}
	\textit{How to provably transfer from source to target domain\\under subpopulation shift using unlabeled data?}
\end{center}
\begin{figure*}[ht]
	\centering
	\includegraphics[width=0.9\textwidth]{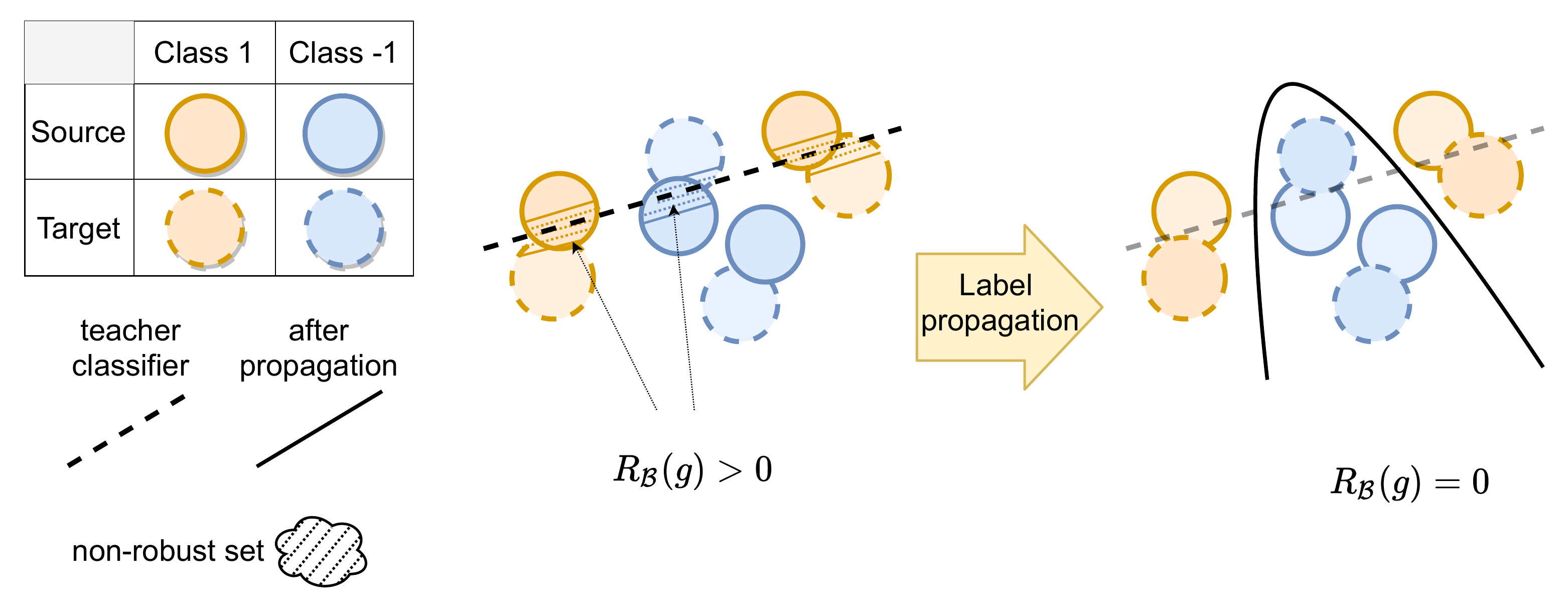}
	\caption{A toy illustration of our framework on label propogation on subpopulations, formalized in Section~\ref{sec:label_prop_da}. Although the formal definition (Assumption~\ref{asm_setting}) involves a neighborhood function $\Bcal(\cdot)$ and possibly a representation space, one can understand it by the above toy model: a set of $S_i$ and $T_i$ where each $S_i \cup T_i$ forms a regular connected component. The consistency loss $R_\Bcal(g)$ measures the amount of non-robust set of $g$, which contains points whose predictions by $g$ is inconsistent in a small neighborhood. Our main theorems (Theorem~\ref{thm_multiplicative} and \ref{thm_constant}) state that, starting from a teacher with information on the source data, consistency regularization (regularizing $R_\Bcal(g)$ on unlabeled data) can result in the propogation of label information, thereby obtaining a good classifier on the target domain, which may also improve upon the accuracy of the teacher on the source domain.}
	\label{fig:illustraion}
\end{figure*}
To address this question, we develop a general framework of domain adaptation where we have a supervision signal on the source domain (through a teacher classifier which has non-trivial performance on the source domain but is allowed to be entirely wrong on the target domain (See Assumption \ref{asm_setting}(a) and Figure~\ref{fig:illustraion}))
and unlabeled data on both source and target domains. The key of the analysis is to show that the supervision signal can be propagated to the unlabeled data. To do so, we partition data from both domains into some subpopulations and leverage a simple but realistic expansion assumption (Definition~\ref{def_expansion}) proposed in \citet{wei2021theoretical} on the subpopulations. We then prove that by minimizing a consistency regularization term~\citep{miyato2018virtual,shu2018dirt,xie2020unsupervised} on unlabeled data from both domains plus a 0-1 consistency loss with the supervision signal (i.e., the teacher classifier) on the source domain, the supervision signal from the subpopulations of the source domain can not only be propagated to the subpopulations of target domain but also refine the prediction on the source domain. In Theorem~\ref{thm_multiplicative} and~\ref{thm_constant}, we give bounds on the test performance on the target domain. Using off-the-shelf generalization bounds, we also obtain end-to-end finite-sample guarantees for neural networks in Section~\ref{sec:finite}. 

In Section~\ref{sec:label_prop_generalized}, we extend our theoretical framework to a more general setting with source-to-target transfer based on an additional unlabeled dataset. As long as the subpopulation components of the unlabeled dataset satisfy the expansion property and cover both the source and target subpopulation components, then one can provably propagate label information from source to target \emph{through} the unlabeled data distribution (Theorem~\ref{thm_multiplicative_2} and \ref{thm_constant_2}). As corollaries, we immediately obtain learning guarantees for both semi-supervised learning and unsupervised domain adaptation. The results can also be applied to various settings like domain generalization etc., see Figure~\ref{figure2}.

We implement the popular consistency-based semi-supervised learning algorithm FixMatch~\citep{sohn2020fixmatch} on the subpopulation shift task from BREEDS~\citep{santurkar2021breeds}, and compare it with popular distributional matching methods ~\citep{ganin2016domain, zhang2019bridging}. Results show that the consistency-based method outperforms distributional matching methods by over $8\%$, partially verifying our theory on the subpopulation shift problem. We also show that combining distributional matching methods and consistency-based algorithm can improve the performance upon distributional matching methods on classic unsupervised domain adaptation datasets such as Office-31~\citep{saenko2010adapting} and Office-Home~\citep{venkateswara2017deep}.

In summary, our contributions are: 1) We introduce a theoretical framework of learning under subpopulation shift through label propagation; 2) We provide accuracy guarantees on the target domain for a consistency-based algorithm using a fine-grained analysis under the expansion assumption~\citep{wei2021theoretical}; 3) We provide a generalized label propagation framework that easily includes several settings, e.g., semi-supervised learning, domain generalization, etc.

\subsection{Related work} 
We review some more literature on domain adaptation, its variants, and consistency regularization, followed by discussions on the distinction of our contributions compared to~\citet{wei2021theoretical}. 

For the less challenging setting of covariate shift where the source domain covers the target domain's support, prior work regarding importance weighting focuses on estimations of the density ratio~\citep{lin2002support,zadrozny2004learning} through kernel mean matching~\citep{huang2006correcting,gretton2007kernel,zhang2013domain,shimodaira2000improving}, and some standard divergence minimization paradigms~\citep{sugiyama2008direct,sugiyama2012density,uehara2016generative,menon2016linking,kanamori2011f}. 
For out-of-support domain adaptation, recent work investigate approaches to match the source and target distribution in representation space~\citep{glorot2011domain,ajakan2014domain,long2015learning,ganin2016domain}. Practical methods involve designing domain adversarial objectives~\citep{tzeng2017adversarial,long2017conditional,hong2018conditional,he2019multi,xie2019multi,zhu2019adapting} or different types of discrepancy minimization~\citep{long2015learning,lee2019sliced,roy2019unsupervised,chen2020homm}.
Another line of work explore self-training or gradual domain adaptation~\citep{gopalan2011domain,gong2012geodesic,glorot2011domain,kumar2020understanding}. For instance, \citet{chen2020self} demonstrates that self-training tends to learn robust features in some specific probabilistic setting. %

Variants of domain adaptation have been extensively studied. For instance, weakly-supervised domain adaptation considers the case where the labels in the source domain can be noisy~\citep{shu2019transferable,liu2019butterfly}; multi-source domain adaptations adapts from multiple source domains~\citep{xu2018deep,zhao2018adversarial}; domain generalization also allows access to  multiple training environments, but seeks out-of-distribution generalization \emph{without} prior knowledge on the target domain~\citep{ghifary2015domain,li2018learning,arjovsky2019invariant,ye2021theoretical}.

The idea of consistency regularization has been used in many settings. \citet{miyato2018virtual,qiao2018deep,xie2020unsupervised} enforce consistency with respect to adversarial examples or data augmentations for semi-supervised learning. \citet{shu2019transferable} combines domain adversarial training with consistency regularization for unsupervised domain adaptation. Recent work on self-supervised learning also leverages the consistency between two aggressive data augmentations to learn meaningful features~\citep{chen2020simple,grill2020bootstrap,caron2020unsupervised}.

Most closely related to our work is \citet{wei2021theoretical}, which introduces a simple but realistic ``expansion'' assumption to analyze label propagation, which states that a low-probability subset of the data must expand to a neighborhood with larger probability relative to the subset. Under this assumption, the authors show learning guarantees for unsupervised learning and semi-supervised learning.

The focus of \citet{wei2021theoretical} is not on domain adaptation, though the theorems directly apply. This leads to several drawbacks that we now discuss. Notably, in the analysis of~\citet{wei2021theoretical} for unsupervised domain adaptation, the population test risk is bounded using the population risk of a pseudo-labeler on the \emph{target domain}.\footnote{In the new version of \citet{wei2021theoretical}, the authors proposed a refined result based on iterative training, which alleviates the error bound's dependency on the error of the target domain pseudo-labeler. Still, a mostly correct pretrained pseudo-labeler is required.} The pseudo-labeler is obtained via training with labeled data on the \emph{source domain}. For domain adaptation, we do not expect such a pseudo-labeler to be directly informative when applied to the target domain, especially when the distribution shift is severe. In contrast, our theorem does not rely on a good pseudo-labeler on the target domain. Instead, we prove that with only supervision on the source domain, the population risk on the target domain can converge to zero as the value of the consistency regularizer of the ground truth classifier decreases (Theorem~\ref{thm_multiplicative},~\ref{thm_constant}). In addition,~\citet{wei2021theoretical} assumes that the probability mass of \emph{each class} together satisfies the expansion assumption. However, each class may consist of several disjoint subpopulations. For instance, the dog class may have different breeds as its subpopulations. This setting differs from the concrete example of the Gaussian mixture model shown in~\citet{wei2021theoretical} where the data of each class concentrate following a Gaussian distribution. In this paper, we instead take a more realistic usage of the expansion assumption by assuming expansion property on the \emph{subpopulations} of each class (Assumption~\ref{asm_setting}). Behind this relaxation is a fine-grained analysis of the probability mass's expansion property, which may be of independent interest. 
\section{Label Propogation in Domain Adaptation}\label{sec:label_prop_da}
In this section, we consider label propagation for unsupervised domain adaptation. We assume the distributions' structure can be characterized by a specific subpopulation shift with the expansion property. In Section~\ref{sec:setting}, we introduce the setting, including the algorithm and assumptions. In Section~\ref{sec:main_theorem}, we present the main theorem on bounding the target error. In Section~\ref{sec:finite}, we provide an end-to-end guarantee on the generalization error of adapting a deep neural network to the target distribution with finite data. In Section~\ref{sec:proof_sketch} we provide a proof sketch for the theorems.

\subsection{Setting}\label{sec:setting}

We consider a multi-class classification problem $\Xcal \rightarrow \Ycal = \{1, \cdots, K\}$. Let $S$ and $T$ be the source and target distribution on $\Xcal$ respectively, and we wish to find a classifier $g: \Xcal \rightarrow \Ycal$ that performs well on $T$. Suppose we have a teacher classifier $g_{tc}$ on $S$. The teacher $g_{tc}$ can be obtained by training on the labeled data on $S$ (standard unsupervised domain adaptation), or by training on a small subset of labeled data on $S$, or by direct transferring from some other trained classifier, etc. In all, the teacher classifier represents all label information we know (and is allowed to have errors). Our goal is to transfer the information in $g_{tc}$ onto $T$ using only unlabeled data.

Our setting for subpopulation shift is formulated in the following assumption.

\begin{assumption}\label{asm_setting}
	Assume the source and target distributions have the following structure: $\supp(S) = \cup_{i=1}^{m}S_i$, $\supp(T) = \cup_{i=1}^{m}T_i$, where $S_i \cap S_j = T_i\cap T_j = S_i\cap T_j = \emptyset$ for all $i \neq j$. We assume the ground truth class $g^\ast(x)$ for $x\in S_i \cup T_i$ is consistent (constant), which is denoted as $y_i \in \{1, \cdots, K\}$. 
	We abuse the notation to let $S_i$, $T_i$ also denote the conditional distribution (probability measure) of $S, T$ on the set $S_i, T_i$ respectively. In addition, we make the following canonical assumptions:
	\begin{enumerate}
	\item The teacher classifier on $S_i$ is informative of the ground truth class $y_i$ by a margin $\gamma > 0$, that is,
	$$\begin{aligned}
	&\Pbb_{x\sim S_i}[g_{tc}(x)=y_i]\ge\Pbb_{x\sim S_i}[g_{tc}(x)=k] + \gamma, \\
	&\forall k \in \{1, \cdots, K\}\backslash\{y_i\}.
	\end{aligned}$$

	\item On each component, the ratio of the population under domain shift is upper-bounded by a constant $r$, i.e.
	$$\frac{\Pbb_{T}[T_i]}{\Pbb_S[S_i]} \le r, \forall i\in\{1, \cdots, m\}.$$
	\end{enumerate}
\end{assumption}

Following \citet{wei2021theoretical}, we make use of a consistency regularization method, i.e. we expect the predictions to be stable under a suitable set of input transformations $\Bcal(x) \subset \Xcal$. The regularizer of $g$ on the mixed probability measure $\frac{1}{2}(S+T)$ is defined as 
$$R_{\Bcal}(g) := \Pbb_{x\sim \frac{1}{2}\rbr{S + T}}[\exists x' \in \Bcal(x), \textrm{s.t. }g(x)\neq g(x')],$$
and a low regularizer value implies the labels are with high probability constant within $\Bcal(x)$.  Prior work on using consistency regularization for unlabeled self-training includes \citet{miyato2018virtual} where $\Bcal(\cdot)$ can be understood as a distance-based neighborhood set and \citet{adel2017unsupervised,xie2020unsupervised} where $\Bcal(\cdot)$ can be understood as the set of data augmentations. In general, $\Bcal(x)$ takes the form $\Bcal(x) = \{x': \exists A \in \Acal \text{ such that } d(x', \Acal(x)) \le r\}$\footnote{In this paper, consistency regularization, expansion property, and label propagation can also be understood as happening in a representation space, as long as $d(x, x') = \|h(x) -h(x')\|$ for some feature map $h$.} for a small number $r > 0$, some distance function $d$, and a class of data augmentation functions $\Acal$.

The set $\Bcal(x)$ is used in the following expansion property. First, for $x \in S_i\cup T_i$ ($i\in \{1, \cdots, m\}$), we define the neighborhood function $\Ncal$ as 
$$\Ncal(x) := (S_i \cup T_i) \cap \{x' | \Bcal(x)\cap \Bcal(x') \neq \emptyset\}$$

and the neighborhood of a set $A\in \Xcal $ as
$$\Ncal(A) := \cup_{x\in A \cap (\cup_{i=1}^{m}S_i\cup T_i)}\Ncal(x).$$
The expansion property on the mixed distribution $\half(S+T)$ is defined as follows:

\begin{definition}[Expansion \citep{wei2021theoretical}] \label{def_expansion}

~ \\\vspace{-.5cm}
	\begin{enumerate}
	\item (Multiplicative Expansion) We say $\half(S+T)$ satisfies $(a, c)$-multiplicative expansion for some constant $a \in (0, 1)$, $c > 1$, if for any $i$ and any subset $A \subset S_i\cup T_i$ with $\Pbb_{\half\rbr{S_i + T_i}}[A] \le a$, we have $\Pbb_{\half(S_i + T_i)}[\Ncal(A)]\ge \min\rbr{c\Pbb_{\half(S_i + T_i)}[A], 1}$.
	
	\item (Constant Expansion) We say $\half(S+ T)$ satisfies $(q, \xi)$-constant expansion for some constant $q, \xi \in (0, 1)$, if for any set $A \subset \Xcal$ with $\Pbb_{\half(S+T)}[A] \ge q$ and $\Pbb_{\half(S_i + T_i)} [A] \le \half, \forall i$, we have $\Pbb_{\half(S+ T)}[\Ncal(A)]\ge \min\rbr{\xi, \Pbb_{\half(S+T)}[A]} + \Pbb_{\half(S+T)}[A]$.
	\end{enumerate}
\end{definition}

The expansion property implicitly states that $S_i$ and $T_i$ are close to each other and regularly shaped. Through the regularizer $R_{\Bcal}(g)$ the label can ``propagate'' from $S_i$ to $T_i$\footnote{Note that our model for subpopulation shift allows any fine-grained form ($m \gg K$), which makes the expansion property more realistic. In image classification, one can take for example $S_i$ as ``Poodles eating dog food'' v.s. $T_i$ as ``Labradors eating meat'' (they're all under the dog class), which is a rather typical form of shift in a real dataset. The representations of such subpopulations can turn out quite close after certain data augmentation and perturbations as in $\Bcal(\cdot)$.}.
One can keep in mind the specific example of Figure~\ref{fig:illustraion} where $B(x) = \{x':\|x - x'\|_2\} \le r\}$ and $S_i \cup T_i$ forms a single connected component. %

Finally, let $G$ be a function class of the learning model. We consider the \emph{realizable} case when the ground truth function $g^\ast \in G$. We assume that the consistency error of the ground truth function is small, and use a constant $\mu > 0$ to represent an upper bound: $R_{\Bcal}(g^\ast) < \mu$. We find the classifier $g$ with the following algorithm:
\begin{align}\nonumber
g &= \argmin_{g: \Xcal \rightarrow \Ycal, g\in G} L_{01}^{S}(g, g_{tc})\\
\label{algorithm_main}
&\mathrm{s.t. }\  R_{\Bcal}(g)  \le \mu ,
\end{align}
where $L_{01}^{S}(g, g_{tc}) := \Pbb_{x\sim S}[g(x)\neq g_{tc}(x)]$ is the 0-1 loss on the \emph{source domain} which encourages $g$ to be aligned with $g_{tc}$ on the source domain. In this paper, we are only concerned with the results of label propagation and not with the optimization process, so we simply take the solution $g$ of (\ref{algorithm_main}) as found and perform analysis on $g$.

Our main theorem will be formulated using $(\half, c)$-multiplicative expansion or $(q, \mu)$-constant expansion\footnote{\citet{wei2021theoretical} contains several examples and illustrations of the expansion property, e.g., the Gaussian mixture example satisfies $(a, c)=(0.5, 1.5)$ multiplicative expansion. The radius $r$ in $\Bcal$ is much smaller than the norm of a typical example, so our model, which requires a separation of $2r$ between components to make $R_{\Bcal}(g^\ast)$ small, is much weaker than a typical notion of ``clustering''.}.

\subsection{Main Theorem}\label{sec:main_theorem}

With the above preparations, we are ready to establish bounds on the target error $\epsilon_T(g) := \Pbb_{x\sim T}(g(x)\neq g^\ast(x))$.

\begin{theorem}[Bound on Target Error with Multiplicative Expansion] \label{thm_multiplicative}
	
	Suppose Assumption~\ref{asm_setting} holds and $\half(S+T)$ satisfies $(\half, c)$-multiplicative expansion. Then the classifier obtained by (\ref{algorithm_main}) satisfies

	$$\epsilon_T(g) \le \max\rbr{\frac{c+1}{c-1}, 3}\frac{8r\mu}{\gamma}.$$
\end{theorem}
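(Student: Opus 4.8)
The plan is to combine two essentially independent mechanisms: a \emph{consistency/expansion} mechanism that forces $g$ to be nearly constant on each component $S_i\cup T_i$, and a \emph{source-supervision} mechanism that forces that constant to be the correct label $y_i$ on all but a small source-mass of components. Throughout write $\Rcal:=\{x:\forall x'\in\Bcal(x),\,g(x)=g(x')\}$ for the robust set, so that $R_\Bcal(g)=\Pbb_{\half(S+T)}[\Xcal\setminus\Rcal]$; for each $i$ let $k_i$ be the label $g$ takes on the largest $\half(S_i+T_i)$-measure (the per-component plurality), and set $e_i^S:=\Pbb_{S_i}[g\neq k_i]$, $e_i^T:=\Pbb_{T_i}[g\neq k_i]$, $\rho_i^S:=\Pbb_{S_i}[\Xcal\setminus\Rcal]$, $\rho_i^T:=\Pbb_{T_i}[\Xcal\setminus\Rcal]$.

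First I would establish the propagation estimate. The key set-theoretic fact is that robust neighbours carry the same label: if $x\in\Rcal$, $x'\in\Ncal(x)\cap\Rcal$, then some $z\in\Bcal(x)\cap\Bcal(x')$ exists (by definition of $\Ncal$), whence $g(x)=g(z)=g(x')$. Thus, writing $A_i:=\{x\in S_i\cup T_i:g(x)\neq k_i\}$, we get $\Ncal(A_i\cap\Rcal)\cap\Rcal\subseteq A_i$. Since $k_i$ is a plurality, $\Pbb_{\half(S_i+T_i)}[A_i]\le\half$, so $(\half,c)$-multiplicative expansion applies to $A_i\cap\Rcal$; bounding $\Pbb_{\half(S_i+T_i)}[\Ncal(A_i\cap\Rcal)]$ from above by $\Pbb_{\half(S_i+T_i)}[A_i]+\Pbb_{\half(S_i+T_i)}[\Xcal\setminus\Rcal]$ and from below by $\min(c\,\Pbb_{\half(S_i+T_i)}[A_i\cap\Rcal],1)$, and disposing of the $\min(\cdot,1)$ branch via $\Pbb[A_i]\le\half$, yields the per-component bound $e_i^S+e_i^T\le\tfrac{c+1}{c-1}(\rho_i^S+\rho_i^T)$.

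Next I would use feasibility of $g^\ast$ (valid because $R_\Bcal(g^\ast)<\mu$) together with the margin. Optimality of $g$ in \eqref{algorithm_main} gives $L_{01}^S(g,g_{tc})\le L_{01}^S(g^\ast,g_{tc})$, i.e. $\sum_i\Pbb_S[S_i]\rbr{\Pbb_{S_i}[g_{tc}=y_i]-\Pbb_{S_i}[g=g_{tc}]}\le0$. On a component with $k_i\neq y_i$, Assumption~\ref{asm_setting}(a) gives $\Pbb_{S_i}[g_{tc}=y_i]-\Pbb_{S_i}[g_{tc}=k_i]\ge\gamma$, and since $\Pbb_{S_i}[g=g_{tc}]\le\Pbb_{S_i}[g_{tc}=k_i]+e_i^S$, the $i$-th summand is at least $\gamma-e_i^S$; on a component with $k_i=y_i$ it is at least $-e_i^S$. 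Summing, $\gamma\sum_{i:k_i\neq y_i}\Pbb_S[S_i]\le\sum_i\Pbb_S[S_i]e_i^S$: the total source mass of wrong-plurality components is controlled by the source consistency defect.

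Finally I would assemble, decomposing $\epsilon_T(g)=\sum_i\Pbb_T[T_i]\Pbb_{T_i}[g\neq y_i]\le\sum_i\Pbb_T[T_i]e_i^T+\sum_{i:k_i\neq y_i}\Pbb_T[T_i]$, bounding the first sum by the propagation estimate and the second by Assumption~\ref{asm_setting}(b) ($\Pbb_T[T_i]\le r\Pbb_S[S_i]$) followed by the source-mass bound. The delicate point—and what I expect to be the main obstacle—is that the expansion inequality lives on the \emph{balanced} per-component measure $\half(S_i+T_i)$, whereas the budget $R_\Bcal(g)=\half\sum_i(\Pbb_S[S_i]\rho_i^S+\Pbb_T[T_i]\rho_i^T)\le\mu$ and the target risk live on the \emph{mass-weighted} mixture. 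Cross terms such as $\Pbb_T[T_i]\rho_i^S$ are tamed by $\Pbb_T[T_i]\le r\Pbb_S[S_i]$, but matching the source-weighted defect $\sum_i\Pbb_S[S_i]e_i^S$ against the $\mu$-budget needs care (this is exactly where the ratio $r$, the factor $8$, and the floor $\max(\tfrac{c+1}{c-1},3)$ should enter); the cleanest route is to charge each component either to the $\mu$-budget or to the source-loss budget according to whether its plurality is correct, rather than bounding $e_i^S$ crudely by $\tfrac{c+1}{c-1}(\rho_i^S+\rho_i^T)$ uniformly.
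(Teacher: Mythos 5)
Your high-level architecture (expansion forces per-component near-constancy; source optimality plus the margin forces the constant to be $y_i$ except on a small source mass; assemble with $\Pbb_T[T_i]\le r\Pbb_S[S_i]$) matches the paper's, and your source-optimality step is a clean, correct variant of the paper's Lemmas~\ref{lem_I} and \ref{lem_target_err}. However, there are two genuine gaps. The first is the claim ``Since $k_i$ is a plurality, $\Pbb_{\half(S_i+T_i)}[A_i]\le\half$.'' This is false for $K\ge 3$: the plurality label may carry measure as small as $1/K$, so its complement $A_i$ can have measure up to $1-1/K>\half$, and then neither $A_i$ nor $A_i\cap\Rcal$ is guaranteed to satisfy the precondition of $(\half,c)$-multiplicative expansion; the same false claim is also what you use to dispose of the $\min(\cdot,1)$ branch. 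Splitting $A_i$ by label and expanding each $A_{ik}$ separately does satisfy the precondition (each single non-plurality label has measure $\le\half$) but costs a factor of $K-1$ because the adjacent non-robust sets need not be disjoint. The paper's fix is a two-block partition of the minority labels (Lemma~\ref{lem_partition_of_Mi}): the non-majority labels are greedily grouped into two sets each of total measure $\le\half$, expansion is applied to each block, and the budget is paid only twice. Some version of this idea is unavoidable and is missing from your argument.

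The second gap is the assembly, which you flag yourself but do not resolve, and which I believe cannot be resolved along the route you sketch. Your propagation estimate is a \emph{relative, per-component} bound $e_i^S+e_i^T\le\tfrac{c+1}{c-1}(\rho_i^S+\rho_i^T)$, while both the budget $R_\Bcal(g)=\tfrac12\sum_i\rbr{\Pbb_S[S_i]\rho_i^S+\Pbb_T[T_i]\rho_i^T}\le\mu$ and the quantities you must control ($\sum_i\Pbb_T[T_i]e_i^T$ and, via the wrong-plurality mass, $\sum_i\Pbb_S[S_i]e_i^S$) are mass-weighted. Re-weighting the relative bound produces the cross term $\sum_i\Pbb_S[S_i]\rho_i^T$, and since Assumption~\ref{asm_setting}(b) bounds only $\Pbb_T[T_i]/\Pbb_S[S_i]$ from above (there is no lower bound), this term is not controlled by $\mu$ or $r$: a component with $\Pbb_S[S_i]\gg\Pbb_T[T_i]$ whose non-robust mass sits almost entirely in $T_i$ defeats the charge. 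The paper sidesteps this entirely by never forming a per-component relative bound: it converts multiplicative to constant expansion and bounds the \emph{absolute} mixture mass $\Pbb_{\half(S+T)}[\widetilde{M}]$ of the global minority set by $\max\rbr{\tfrac{c+1}{c-1},3}\mu$ in one shot (Lemma~\ref{lem_minority}), then transfers to source and target masses using only $\Pbb_S[\,\cdot\,]\le 2\,\Pbb_{\half(S+T)}[\,\cdot\,]$ and $\Pbb_{T_i}[\,\cdot\,]\le 2\,\Pbb_{\half(S_i+T_i)}[\,\cdot\,]$, together with the observation that a wrong-majority component with $i\notin I$ must satisfy $\Pbb_{S_i}[\widetilde{M}_i]\ge\gamma/2$, so its entire target mass can be charged to the absolute minority mass. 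Your ``charge each component to one of two budgets'' idea points in this direction but, as stated, does not supply the missing absolute-mass control.
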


\begin{theorem}[Bound on Target Error with Constant Expansion] \label{thm_constant}
	
	Suppose Assumption~\ref{asm_setting} holds and $\half(S+T)$ satisfies $(q, \mu)$-constant expansion.  Then the classifier obtained by (\ref{algorithm_main}) satisfies

	$$\epsilon_T(g) \le (2\max(q, \mu) + \mu)\frac{8r\mu}{\gamma}.$$
\end{theorem}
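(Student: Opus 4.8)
The plan is to follow the same skeleton as the proof of Theorem~\ref{thm_multiplicative}, substituting an estimate adapted to $(q,\mu)$-constant expansion for the multiplicative one, and to organize the argument around a partition of the $m$ subpopulations. For each $i$ let $c_i$ be the plurality label of $g$ under $\frac{1}{2}(S_i+T_i)$, and call component $i$ \emph{good} if $c_i=y_i$ and \emph{bad} otherwise. Since $g^\ast=y_i$ on $T_i$, we may write $\epsilon_T(g)=\sum_{i=1}^{m}\Pbb_{T}[T_i]\,\Pbb_{T_i}[g\neq y_i]$, and I would bound the good components by the size of their minority error region and the bad components by their total mass, in each case invoking Assumption~\ref{asm_setting}(b) to trade a source-side quantity for a target-side one at the cost of the factor $r$. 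Throughout let $N:=\{x:\exists\,x'\in\Bcal(x),\ g(x)\neq g(x')\}$ be the non-robust set, so that $\Pbb_{\frac{1}{2}(S+T)}[N]=R_{\Bcal}(g)\le\mu$ by feasibility in (\ref{algorithm_main}).

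The first ingredient is the robust-neighbor observation: if $x,x'$ both lie outside $N$ and $\Bcal(x)\cap\Bcal(x')\neq\emptyset$, then any $z$ in the intersection satisfies $g(x)=g(z)=g(x')$. Consequently, writing $A:=\bigcup_{i=1}^{m}\{x\in S_i\cup T_i:g(x)\neq c_i\}$ for the union of the per-component minorities and $A_r:=A\setminus N$ for its robust part, any robust neighbor of a robust minority point is again a minority point of the same component; hence $\Ncal(A_r)\setminus A_r\subseteq N$, i.e. $\Ncal$ can enlarge $A_r$ only inside the non-robust budget. By the choice of plurality we have $\Pbb_{\frac{1}{2}(S_i+T_i)}[A]\le\frac{1}{2}$ for every $i$, so Definition~\ref{def_expansion} applies to $A_r$ as soon as $\Pbb_{\frac{1}{2}(S+T)}[A_r]\ge q$: expansion forces a gain of at least $\min(\mu,\Pbb_{\frac{1}{2}(S+T)}[A_r])$, which the containment says must be paid for out of $N$. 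Pitting this gain against $\Pbb_{\frac{1}{2}(S+T)}[N]\le\mu$ and separating the sub-threshold regime $\Pbb_{\frac{1}{2}(S+T)}[A]<q$ from the expanding regime is what controls the total minority mass $M:=\Pbb_{\frac{1}{2}(S+T)}[A]$ at the scale set by $\max(q,\mu)$.

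The second ingredient turns the objective of (\ref{algorithm_main}) against the bad components. Feasibility of $g^\ast$ gives $L_{01}^{S}(g,g_{tc})\le L_{01}^{S}(g^\ast,g_{tc})$. On a bad component the minority bound forces $g$ to equal the single label $c_i=k\neq y_i$ on all but a small part of $S_i$; Assumption~\ref{asm_setting}(a) then gives $\Pbb_{S_i}[g_{tc}\neq k]\ge\Pbb_{S_i}[g_{tc}\neq y_i]+\gamma$, so predicting $k$ rather than $y_i$ inflates the source loss on $S_i$ by at least $\gamma$, up to the source-minority slack. Summing the per-component comparison of $L_{01}^{S}(g,g_{tc})$ with $L_{01}^{S}(g^\ast,g_{tc})$ and invoking optimality, the total source mass of the bad components is at most $\frac{1}{\gamma}$ times the total source minority; the latter is at most $2M$ after converting $\frac{1}{2}(S_i+T_i)$-mass into $S_i$-mass, and Assumption~\ref{asm_setting}(b) then bounds the target mass of the bad components by $r$ times this amount.

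Finally I would assemble the two pieces. The good-component contribution to $\epsilon_T(g)$ is exactly the target minority there, which is at most $2M$ after the mixed-to-target conversion, whereas the bad-component contribution is at most $\frac{2rM}{\gamma}$ by the previous paragraph; collecting the factor $2$ from the measure conversions, the $\frac{1}{\gamma}$ from the margin, the $r$ from the population ratio, and the expansion prefactor produces the claimed bound $\epsilon_T(g)\le(2\max(q,\mu)+\mu)\frac{8r\mu}{\gamma}$. I expect the real difficulty to be the constant-expansion estimate of the second paragraph: because the expansion parameter $\xi$ is taken equal to $\mu=R_{\Bcal}(g)$, the expansion gain $\min(\mu,\cdot)$ and the non-robust budget $\mu$ are of the same order, so the argument sits precisely at the boundary where a naive contradiction fails, and recovering the factor $\max(q,\mu)$ requires a careful choice of the locally-minority set together with an explicit split between the sub-threshold and expanding regimes. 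A secondary, bookkeeping obstacle is the repeated passage between the mixed measure $\frac{1}{2}(S_i+T_i)$, on which the regularizer and the expansion property are stated, and the one-sided measures $S_i$ and $T_i$, on which the margin and ratio assumptions live; this conversion is what generates the numerical constants.
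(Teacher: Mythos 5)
Your overall architecture matches the paper's: control the mass of the per-component minority set via expansion against the non-robust budget $R_{\Bcal}(g)\le\mu$, then play the optimality of $g$ in (\ref{algorithm_main}) against the margin of $g_{tc}$ to bound the components whose plurality label is wrong, and convert between the mixed and one-sided measures at the end. Your good/bad decomposition by whether $c_i=y_i$ is a legitimate variant of the paper's decomposition by the inconsistency set $I$ (Lemma~\ref{lem_I_2}), and your accounting of the constants for that half is sound.

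However, there is a genuine gap in the expansion step. You claim that ``by the choice of plurality we have $\Pbb_{\frac{1}{2}(S_i+T_i)}[A]\le\frac{1}{2}$ for every $i$,'' and you need this to invoke Definition~\ref{def_expansion}(2), whose hypothesis requires $\Pbb_{\frac12(S_i+T_i)}[A]\le\frac12$ for all $i$. This is false for $K>2$: the plurality class can carry less than half the mass (e.g.\ three labels with masses $0.4,0.3,0.3$ give a minority set of mass $0.6$), so constant expansion cannot be applied to the robust minority set as a whole. This is precisely the obstacle the paper's proof is built around: Lemma~\ref{lem_partition_of_Mi} greedily splits the non-majority labels into two groups $J_{i1},J_{i2}$ so that each group's robust minority union has per-component mass at most $\frac12$ (the greedy step works because the majority piece $A_{iy_i^{\text{Maj}}}$ dominates each candidate piece --- note the majority must be taken over the \emph{robust} pieces $A_{ik}$, not over the whole component as in your definition of $c_i$, for this comparison to go through). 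Expansion is then applied to each half separately, yielding $\max(q,\mu)$ twice; this partition, not the measure conversion, is the source of the $2$ in $2\max(q,\mu)$. As written, your argument establishes the theorem only for $K=2$, and the missing partition idea is the essential content of Lemma~\ref{lem_minority_2}.
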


We make the following remarks on the main results, and also highlight the differences from directly applying \citet{wei2021theoretical} to domain adaptation.

\begin{remark}\label{remark1}
	The theorems state that as long as the ground truth consistency error (equivalently, $\mu$) is small enough, the classifier can achieve near-zero error. This result does not rely on the teacher being close to zero error, as long as the teacher has a positive margin $\gamma$. As a result, the classifier $g$ can improve upon $g_{tc}$ (including on $S$, as the proof of the theorems can show), in a way that the error of $g$ converge to zero as $\mu \rightarrow 0$, regardless of the error of $g_{tc}$. 
	This improvement is due the algorithmic change in Equation \eqref{algorithm_main} which strongly enforces label propagation. Under multiplicative expansion, \citet{wei2021theoretical} attain a bound of the form $O(\frac1c \mathrm{error}(g_{tc}) + \mu)$, which explicitly depends on the accuracy of the teacher $g_{tc}$ on the target domain.\footnote{In the new version of \citet{wei2021theoretical}, the authors proposed a refined result based on iterative training, which alleviates the error bound's dependency on the error of the target domain pseudo-labeler. However, their results still require a mostly correct pseudo-labler on the target domain and require the expansion constant $c$ to be much larger than $1$.} The improvement is due to that we strongly enforce consistency rather than balancing consistency with teacher classifier fit.
\end{remark}

\begin{remark}\label{remark2}
	We do not impose any lower bound on the measure of the components $S_i, T_i$, which is much more general and realistic. From the proofs, one may see that we allow some components to be entirely mislabeled, but in the end, the total measure of such components will be bounded. Directly applying \citet{wei2021theoretical} would require a stringent lower bound on the measure of each $S_i,T_i$.
\end{remark}

\begin{remark}
	We only require expansion with respect to the individual components $S_i \cup T_i$, instead of the entire class~\citep{wei2021theoretical}, which is a weaker requirement. 
\end{remark}

The proofs are essentially because the expansion property turns local consistency into a form of global consistency. The proof sketch is in Section~\ref{sec:proof_sketch}, and the full proof is in Appendix~\ref{sec:full_proof}.

\subsection{Finite Sample Guarantee for Deep Neural Networks}\label{sec:finite}
In this section, we leverage existing generalization bounds to prove an end-to-end guarantee on training a deep neural network with finite samples on $S$ and $T$. The results indicate that if the ground-truth class is realizable by a neural network $f^\ast$ by a large robust margin, then the total error can be small.

For simplicity let there be $n$ i.i.d. data each from $S$ and $T$ (a total of $2n$ data), and the empirical distribution is denoted $\hat{S}$ and $\hat{T}$. In order to upper-bound the loss $L_{01}$ and $R_\Bcal(g)$, we apply a notion of all-layer margin \citep{wei2019improved}
\footnote{Though other notions of margin can also work, this helps us to leverage the results from \citet{wei2021theoretical}.},
which measures the stability of the neural net to simultaneous perturbations to each hidden layer. We first cite the useful results from \citet{wei2021theoretical}. Suppose $g(x) = \argmax_{i\in\{1, \cdots, K\}} f(x)_i$ where $f: \Xcal \rightarrow \Rbb^K, x\mapsto W_p\phi(\cdots \phi(W_1x)\cdots)$ is the neural network with weight matrices $\{W_i\}_{i=1}^{p}$,
\footnote{Similarly, $f^\ast$ and $g^\ast$ is the ground truth network and its induced classifier.} 
and $q$ is the maximum width of any layer. Let $m(f, x, y) \ge 0$ denote the all-layer margin at input $x$ for label $y$. 
\footnote{For now, we only use $m(f, x, y) = 0$ if $f(x) \neq y$, so that we can upper bound $\mathbf{1}(g(x) \neq g^\ast(x))$ with $\mathbf{1}\rbr{m(f, x, y) \ge t}$ for any $t > 0$. One can refer the datailed definition to Appendix~\ref{sec:proof_finite} or in \citet{wei2019improved}.
} 
We also define the robust margin $m_\Bcal(f, x) = \min_{x'\in \Bcal(x)}m(f, x', \argmax_i f(x)_i)$. We state the following results.

\begin{proposition}[Theorem C.3 from \citet{wei2021theoretical}]\label{proposition1}
	For any $t > 0$, with probability $1-\delta$, 
	$$\begin{aligned}
	L_{01}^{S}(g, g_{tc}) & \le \Pbb_{x\sim \hat{S}}[m(f, x, g_{tc}(x))\le t] \\
	& + \widetilde{O}\rbr{\frac{\sum_i\sqrt{q}\|W_i\|_F}{t\sqrt{n}} + \sqrt{\frac{\log(1/\delta)+p\log n}{n}}},
	\end{aligned}$$
	where $\widetilde{O}(\cdot)$ hides poly-logarithmic factors in $n$ and $d$.
\end{proposition}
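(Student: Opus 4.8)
The plan is to reconstruct the standard all-layer-margin generalization argument of \citet{wei2019improved,wei2021theoretical} rather than to derive anything new, since this proposition is imported verbatim. The starting point is the defining property of the all-layer margin already recorded in the excerpt: $m(f,x,y)\ge 0$ always, with $m(f,x,g_{tc}(x))=0$ exactly when $g(x)\neq g_{tc}(x)$. Hence $\one\rbr{g(x)\neq g_{tc}(x)}\le \one\rbr{m(f,x,g_{tc}(x))\le t}$ pointwise for every $t>0$, so $L_{01}^{S}(g,g_{tc})=\Pbb_{x\sim S}\sbr{m(f,x,g_{tc}(x))=0}\le \Pbb_{x\sim S}\sbr{m(f,x,g_{tc}(x))\le t}$. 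The task is thereby reduced to a one-sided uniform-convergence statement for the scalar margin functional over the source distribution.

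To pass from population to empirical probabilities I would sandwich the indicators by a $(1/t)$-Lipschitz ramp $\zeta_t$ with $\one\rbr{u\le 0}\le \zeta_t(u)\le \one\rbr{u\le t}$, so that $\Pbb_{x\sim S}\sbr{m\le 0}\le \Ebb_{x\sim S}\sbr{\zeta_t(m)}$ while $\Ebb_{x\sim \hat S}\sbr{\zeta_t(m)}\le \Pbb_{x\sim \hat S}\sbr{m\le t}$. A standard Rademacher bound for the class $\cbr{x\mapsto \zeta_t(m(f,x,g_{tc}(x))):f}$ then gives, with probability $1-\delta$, $\Ebb_S\sbr{\zeta_t}\le \Ebb_{\hat S}\sbr{\zeta_t}+2\Rcal_n+O\rbr{\sqrt{\log(1/\delta)/n}}$, where $\Rcal_n$ is the empirical Rademacher complexity of the margin class; the Ledoux--Talagrand contraction lemma then strips the ramp at the cost of a factor $1/t$, leaving the complexity of the raw all-layer margin class.

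The heart of the argument, and the step I expect to be the main obstacle, is bounding that Rademacher complexity by $\widetilde O\rbr{\sum_i \sqrt q\,\|W_i\|_F/\sqrt n}$, i.e.\ with a \emph{sum} of layer-wise Frobenius norms rather than the \emph{product} of spectral norms that naive margin bounds produce. This is exactly what the all-layer margin buys: by its construction $m(f,x,y)$ is $1$-Lipschitz with respect to simultaneous, suitably normalized perturbations of all $p$ weight matrices, so an $\eps$-cover of each $W_i$ in Frobenius norm induces a uniform cover of the margin functions. One then builds a per-layer cover of size controlled by $\sqrt q\,\|W_i\|_F/\eps$, composes the $p$ covers, and runs Dudley's entropy integral; the $\sqrt q$ factors come from covering the width-$q$ hidden perturbations and the additive $p\log n$ inside the concentration term from the $p$ layers together with a union over a discretization of $t$. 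Collecting the contraction factor $1/t$, the covering-number estimate, and the concentration term reproduces the stated bound. The two delicate ingredients are the layer-wise Lipschitz stability lemma for $m$ and the Frobenius-norm covering count; both are the core technical contributions of \citet{wei2019improved}, so I would invoke them as black boxes and only verify the elementary reductions above.
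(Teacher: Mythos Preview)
The paper does not prove this proposition at all: it is explicitly imported as Theorem~C.3 of \citet{wei2021theoretical}, and the appendix merely restates the definition of the all-layer margin before remarking that ``the related results about all-layer margin (Proposition~\ref{proposition1} and \ref{proposition2}), though, come directly from \citet{wei2021theoretical}.'' Your reconstruction is therefore strictly more than what the paper itself supplies, and it faithfully tracks the original argument of \citet{wei2019improved,wei2021theoretical}: the pointwise domination $\one\rbr{g(x)\neq g_{tc}(x)}\le \one\rbr{m(f,x,g_{tc}(x))\le t}$, the ramp surrogate plus Ledoux--Talagrand contraction to reduce to the Rademacher complexity of the raw margin class, and the per-layer Frobenius covering combined with Dudley's integral to obtain the additive $\sum_i\sqrt{q}\|W_i\|_F$ dependence. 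You are also right to flag the layer-wise Lipschitz stability of $m$ and the covering count as the two nontrivial black boxes; those are precisely the contributions of \citet{wei2019improved} that the present paper has no intention of reproving.
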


\begin{proposition}[Theorem 3.7 from \citet{wei2021theoretical}]\label{proposition2}
	For any $t > 0$, With probability $1-\delta$, 
	$$\begin{aligned}
	R_{\Bcal}(g) & \le \Pbb_{x\sim \half(\hat{S} + \hat{T})}[m_{\Bcal}(f, x) \le t] \\
	& + \widetilde{O}\rbr{\frac{\sum_i\sqrt{q}\|W_i\|_F}{t\sqrt{n}} + \sqrt{\frac{\log(1/\delta)+p\log n}{n}}}.
	\end{aligned}$$
\end{proposition}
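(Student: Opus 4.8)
The plan is to reduce the population quantity $R_\Bcal(g)$ to a margin-based surrogate and then apply a uniform-convergence bound tailored to the all-layer margin. First, observe that if $g$ is non-robust at $x$, i.e.\ there is some $x'\in\Bcal(x)$ with $g(x)\neq g(x')$, then the robust all-layer margin vanishes: $m_\Bcal(f,x)\le 0$. Indeed, writing $y=\argmax_i f(x)_i=g(x)$, a disagreeing $x'$ satisfies $\argmax_i f(x')_i\neq y$, so by the stated convention $m(f,x',y)=0$, and hence the minimum over $\Bcal(x)$ is $0$. Introducing the $1/t$-Lipschitz ramp $\ell_t(s):=\min\{1,\max\{0,1-s/t\}\}$, which obeys $\mathbf{1}(s\le 0)\le \ell_t(s)\le \mathbf{1}(s\le t)$, the event of non-robustness is contained in $\{m_\Bcal(f,x)\le 0\}$, so that
\[
R_\Bcal(g)\;\le\;\Ebb_{x\sim\half(S+T)}\sbr{\ell_t\rbr{m_\Bcal(f,x)}}.
\]

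Second, I would control the gap between the population and empirical expectations of $\ell_t\circ m_\Bcal(f,\cdot)$. Since the data are a balanced two-sample ($n$ points each from $S$ and $T$), I write $\Ebb_{\half(S+T)}=\half\Ebb_S+\half\Ebb_T$ and estimate each half with its $n$ samples. A standard symmetrization argument bounds the gap by the empirical Rademacher complexity of the loss class $\cbr{x\mapsto \ell_t(m_\Bcal(f,x))}$ plus a bounded-differences deviation term of order $\sqrt{\log(1/\delta)/n}$. Because $\ell_t$ is $1/t$-Lipschitz and bounded, Talagrand's contraction lemma extracts the factor $1/t$ and reduces the task to the Rademacher complexity $\mathfrak{R}_n$ of the robust all-layer margin class $\cbr{x\mapsto m_\Bcal(f,x)}$. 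On the empirical side, $\ell_t\le\mathbf{1}(\cdot\le t)$ gives $\Ebb_{\half(\hat S+\hat T)}[\ell_t]\le \Pbb_{x\sim\half(\hat S+\hat T)}[m_\Bcal(f,x)\le t]$, which is the leading term of the claim.

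The main obstacle is the third step: bounding $\mathfrak{R}_n$ by $\widetilde O\rbr{\sum_i\sqrt q\,\|W_i\|_F/\sqrt n}$. This is exactly where the all-layer margin of \citet{wei2019improved} is essential. By construction $m(f,x,y)$ is the smallest total norm of per-layer perturbations needed to flip the prediction, which makes it $1$-Lipschitz with respect to perturbations inserted \emph{simultaneously} at every hidden layer. This layerwise Lipschitzness permits covering the margin class layer by layer, where a Maurey/matrix-covering estimate contributes $\widetilde O(\sqrt q\,\|W_i\|_F)$ to the log-covering number at layer $i$, and crucially these contributions \emph{add} across the $p$ layers rather than multiplying; this product-free, additive dependence is the whole reason for using the all-layer margin and is what produces $\sum_i\sqrt q\,\|W_i\|_F$. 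Passing from $m$ to the robust version $m_\Bcal$ costs nothing: since a pointwise minimum of $1$-Lipschitz functions is again $1$-Lipschitz, the same covering bound applies verbatim. This is precisely the content of Theorem~3.7 in \citet{wei2021theoretical}, which I would invoke directly rather than re-derive.

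Finally, discretizing the threshold $t$ over a logarithmic grid and taking a union bound over the $p$-layer covering introduces the $p\log n$ term inside the square root, while the high-probability concentration step contributes $\log(1/\delta)$; combining the three steps and absorbing poly-logarithmic factors into $\widetilde O(\cdot)$ yields the stated bound. The only genuinely technical ingredient is the all-layer-margin Rademacher estimate, imported from \citet{wei2019improved,wei2021theoretical}; everything else is the routine ramp-loss reduction and symmetrization.
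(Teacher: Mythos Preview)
The paper does not prove this proposition at all: it is stated as a direct citation of Theorem~3.7 from \citet{wei2021theoretical} and used as a black box in the proof of Theorem~\ref{thm:finite_main}. Your sketch is a faithful outline of how that result is established in the cited reference---the ramp-loss surrogate, symmetrization, contraction, and the all-layer-margin covering argument are exactly the ingredients used there---so in substance you and the paper agree, with the only difference being that you unpack the cited theorem while the paper simply invokes it.
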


To ensure generalization we replace the loss functions with the margin loss in the algorithm and solve
\begin{align}\nonumber
g & = \argmin_{g: \Xcal \rightarrow \Ycal, g\in G} \Pbb_{x\sim \hat{S}}[m(f, x, g_{tc}(x))\le t]\\
\label{algorithm_margin}
& \textrm{s.t. } 
\Pbb_{x\sim \half(\hat{S} + \hat{T})}[m_{\Bcal}(f, x) \le t]  \le \mu
\end{align}
where $\mu \ge \Pbb_{x\sim \half(\hat{S} + \hat{T})}[m_{\Bcal}(f^\ast, x) \le t]$. Based on these preparations, we are ready to state the final bound.

\begin{theorem}\label{thm:finite_main}
	Suppose Assumption~\ref{asm_setting} holds, and $g$ is returned by (\ref{algorithm_margin}). With probability $1-\delta$, we have:
	
	(a) Under $(1/2, c)$-multiplicative expansion on $\half(S+T)$ we have
	$$\epsilon_T(g) \le \frac{8r}{\gamma}\rbr{\max\rbr{\frac{c+1}{c-1}, 3}\hat{\mu} + \Delta}.$$
	
	(b) Under $(q, \hat{\mu})$-constant expansion on $\half(S+T)$ we have
	$$\epsilon_T(g) \le \frac{8r}{\gamma}\rbr{2\max\rbr{q, \hat{\mu}} + \hat{\mu} + \Delta}.$$
	
	where
	$$\begin{aligned}
	\Delta  = \widetilde{O}& \left(\left(\Pbb_{x\sim\hat{S}}[m(f^\ast, x, g_{tc}(x))\le t] - L_{01}^{\hat{S}}(g^\ast, g_{tc})\right)\right. \\
	& + \left. \frac{\sum_i\sqrt{q}\|W_i\|_F}{t\sqrt{n}} + \sqrt{\frac{\log(1/\delta)+p\log n}{n}} \right),\\
	\hat{\mu} = \mu& + \widetilde{O}\rbr{\frac{\sum_i\sqrt{q}\|W_i\|_F}{t\sqrt{n}} + \sqrt{\frac{\log(1/\delta)+p\log n}{n}}}.\end{aligned}$$
	
\end{theorem}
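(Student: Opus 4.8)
The plan is to view \eqref{algorithm_margin} as the empirical surrogate of \eqref{algorithm_main} and to reduce the finite-sample claim to the population Theorems~\ref{thm_multiplicative} and~\ref{thm_constant}, paying for two generalization gaps with the propositions already cited. First I would check that the ground-truth network $f^\ast$ is \emph{feasible} for \eqref{algorithm_margin}: by hypothesis $\mu \ge \Pbb_{x\sim\half(\hat S+\hat T)}[m_\Bcal(f^\ast,x)\le t]$, so $f^\ast$ meets the empirical consistency constraint. Feasibility of $f^\ast$ together with optimality of the returned $g$ then gives the empirical comparison $\Pbb_{x\sim\hat S}[m(f,x,g_{tc}(x))\le t] \le \Pbb_{x\sim\hat S}[m(f^\ast,x,g_{tc}(x))\le t]$, while the constraint itself gives $\Pbb_{x\sim\half(\hat S+\hat T)}[m_\Bcal(f,x)\le t]\le\mu$.

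Next I would convert the two empirical quantities attached to $g$ into their population counterparts. Proposition~\ref{proposition2} applied to the consistency constraint yields $R_\Bcal(g)\le\hat\mu$ with exactly the stated $\hat\mu$, so $g$ is feasible for the \emph{population} program with tolerance $\hat\mu$ in place of $\mu$. Proposition~\ref{proposition1} applied to the source objective, combined with the empirical comparison above, gives $L_{01}^S(g,g_{tc}) \le \Pbb_{x\sim\hat S}[m(f^\ast,x,g_{tc}(x))\le t] + \widetilde{O}(\cdot)$. Subtracting $L_{01}^S(g^\ast,g_{tc})$, splitting $\Pbb_{x\sim\hat S}[m(f^\ast,x,g_{tc}(x))\le t]$ as $L_{01}^{\hat S}(g^\ast,g_{tc})$ plus the empirical margin-versus-$0/1$ gap, and converting the fixed-function empirical loss $L_{01}^{\hat S}(g^\ast,g_{tc})$ to its population value by a one-sided concentration bound, I obtain $L_{01}^S(g,g_{tc}) - L_{01}^S(g^\ast,g_{tc}) \le \Delta$ with precisely the $\Delta$ in the statement. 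In words, the finite-sample minimizer behaves like a population feasible point whose consistency error is at most $\hat\mu$ and whose source disagreement with $g_{tc}$ exceeds that of $g^\ast$ by at most $\Delta$.

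The final step is to push these two facts through the argument behind Theorems~\ref{thm_multiplicative} and~\ref{thm_constant}. Rather than invoke them as black boxes (they assume exact optimality, which only delivers $L_{01}^S(g,g_{tc})\le L_{01}^S(g^\ast,g_{tc})$), I would re-read their proofs as a lemma taking two inputs, a consistency bound $R_\Bcal(g)\le\nu$ and a source-alignment slack $L_{01}^S(g,g_{tc}) - L_{01}^S(g^\ast,g_{tc})\le\eta$, and producing the corresponding target bound with $\mu$ replaced by $\nu$ and an extra additive $\frac{8r}{\gamma}\eta$; under multiplicative expansion the amplification constant stays $\max(\frac{c+1}{c-1},3)$, and under constant expansion the $(q,\xi)$-hypothesis is instantiated with $\xi=\nu$ so that it matches the $(q,\hat\mu)$-expansion assumed in part (b). Setting $\nu=\hat\mu$ and $\eta=\Delta$ then yields both (a) and (b).

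The hard part will be this last reduction: I must verify that the population proof degrades \emph{gracefully} and \emph{additively} in the source slack $\eta$. The population argument is essentially an exchange/switching argument in which relabeling a wrongly-labeled component would strictly reduce the source disagreement with $g_{tc}$ by an amount governed by the teacher margin $\gamma$, with the density ratio $r$ converting source mass to target mass and the expansion property forcing near-constancy on each $S_i\cup T_i$ up to the $R_\Bcal(g)$-mass. I need to confirm that replacing ``$g$ is optimal, so switching cannot help'' by ``switching cannot help by more than $\eta$'' only inflates the bounded mislabeled mass by the additive $\frac{8r}{\gamma}\eta$, and does not interact multiplicatively with the expansion-driven propagation (so the constant-expansion coefficient $2\max(q,\hat\mu)+\hat\mu$ is unchanged). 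Everything else, namely the two propositions, the fixed-function concentration, and a union bound over the three high-probability events absorbing the logarithmic and norm-dependent terms into $\Delta$ and $\hat\mu$, is routine once that robustness of the population argument is established.
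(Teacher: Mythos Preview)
Your proposal is correct and follows essentially the same approach as the paper: the paper likewise establishes $R_\Bcal(g)\le\hat\mu$ and $L_{01}^S(g,g_{tc})\le L_{01}^S(g^\ast,g_{tc})+O(\Delta)$ via Propositions~\ref{proposition1} and~\ref{proposition2} together with feasibility of $f^\ast$ and a concentration step, and then invokes a slack-tolerant version of Lemmas~\ref{lem_I} and~\ref{lem_target_err} showing that a source-alignment slack of $\eta$ contributes only an additive $\frac{8r}{\gamma}\cdot O(\eta)$ to the target bound. Your identification of this last ``graceful additive degradation'' as the only nontrivial verification is exactly right, and the paper handles it precisely as you anticipate, by tracing the slack through the inequality that bounds $\sum_{i\in I}\Pbb_S[S_i]$.
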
 

\begin{remark}
	Note that the first term in $\Delta$ is small if $t$ is small, and as $n\rightarrow\infty$, the bounds $\Delta$ can be close to $0$ and $\hat{\mu}$ can be close to $\mu$, which gives us the bounds in Section~\ref{sec:main_theorem}.
	
	Similar to the argument in \citet{wei2021theoretical}, it is worth noting that our required sample complexity does not depend exponentially on the dimension. This is in stark contrast to classic non-parametric methods for unknown ``clusters'' of samples, where the sample complexity suffers the curse of dimensionality of the input space.
\end{remark}

The proof of Theorem~\ref{thm:finite_main} is in Appendix~\ref{sec:proof_finite}.

\subsection{Proof Sketch for Theorem~\ref{thm_multiplicative} and \ref{thm_constant}}\label{sec:proof_sketch}
To prove the theorems, we first introduce some concepts and notations. 

A point $x\in \Xcal$ is called \emph{robust} w.r.t. $\Bcal$ and $g$ if for any $x'$ in $\Bcal(x)$, $g(x) = g(x')$. Denote 
$$RS(g) := \{x| g(x) = g(x'), \forall x' \in \Bcal(x)\},$$
which is called the \emph{robust set} of $g$. Let 
$$A_{ik} := RS(g)\cap (S_i\cup T_i)\cap\{x|g(x)=k\}$$
for $i\in\{1, \cdots, m\}$, $k\in \{1, \cdots, K\}$, and they form a partition of the set $RS(g)$. Denote 
$$y_i^{\text{Maj}} := \argmax_{k\in\{1, \cdots, K\}}\Pbb_{\half(S+T)}[A_{ik}],$$ 
which is the majority class label of $g$ in the robust set on $(S_i\cup T_i)$. We also call 
$$M_i := \bigcup_{k\in\{1, \cdots, K\}\backslash \{y_i^{\text{Maj}}\}}A_{ik}$$
and $M := \bigcup_{i=1}^{m}M_i$
the \emph{minority robust set} of $g$.
In addition, let
$$\widetilde{M}_i := (S_i\cup T_i) \cap \{x|g(x) \neq y_i^{\text{Maj}}\}$$
and 
$\widetilde{M} := \bigcup_{i=1}^{m}\widetilde{M}_i$
be the \emph{minority set} of $g$, which is superset to the minority robust set.

The expansion property can be used to control the total population of the minority set.

\begin{lemma}[Upper Bound of Minority Set]\label{lem_minority} For the classifier $g$ obtained by (\ref{algorithm_main}), $\Pbb_{\half(S+T)}[\widetilde{M}]$ can be bounded as follows:
	
	(a) Under $(\half, c)$-multiplicative expansion, we have $\Pbb_{\half (S+T)}[\widetilde{M}] \le \max\rbr{\frac{c+1}{c-1}, 3}\mu$. 
	
	(b) Under $(q, \mu)$-constant expansion, we have $\Pbb_{\half (S+T)}[\widetilde{M}] \le 2\max\rbr{q, \mu} + \mu$. 
\end{lemma}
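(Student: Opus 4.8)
The plan is to reduce everything to controlling the \emph{robust} minority set $M=\bigcup_i M_i$, exploiting that under the consistency constraint the only way $g$ can disagree with its own per-component majority on a robust point is to ``pay'' for it with non-robust mass, whose total is capped by $\mu$. Write $\nu=\half(S+T)$, $\nu_i=\half(S_i+T_i)$, $w_i=\nu(S_i\cup T_i)$, and let $NR:=\Xcal\setminus RS(g)$ be the non-robust set, so feasibility of $g$ in (\ref{algorithm_main}) gives $\nu(NR)=R_\Bcal(g)\le\mu$. Since $\widetilde M=M\,\sqcup\,(\widetilde M\cap NR)$ and $\nu(\widetilde M\cap NR)\le\nu(NR)\le\mu$, it suffices to bound $\nu(M)$; this already accounts for the additive $+\mu$ in (b) and is absorbed into the constants in (a).

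The geometric heart of the argument is that robust neighbors share labels: if $x,x'$ are both robust and $\Bcal(x)\cap\Bcal(x')\ne\emptyset$, picking $z$ in the intersection gives $g(x)=g(z)=g(x')$. Consequently each class piece $A_{ik}$, the majority piece $A_{i,y_i^{\mathrm{Maj}}}$, and their unions are closed under passing to robust neighbors; in particular $\Ncal(M_i)\cap RS(g)\subseteq M_i$, so $\Ncal(M_i)\setminus M_i\subseteq NR$ and therefore $\nu_i(\Ncal(M_i))\le\nu_i(M_i)+\nu_i(NR\cap(S_i\cup T_i))$. This is the inequality that lets expansion bite: a set that can grow only into $NR$ cannot have been allowed to grow much.

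For part (a) I would argue per component. When $\nu_i(M_i)\le\half$, applying $(\half,c)$-multiplicative expansion to $M_i$ gives $\min\rbr{c\,\nu_i(M_i),1}\le\nu_i(M_i)+\nu_i(NR_i)$, which yields $\nu_i(M_i)\le\max\rbr{\tfrac{1}{c-1},1}\nu_i(NR_i)$ after separating the non-saturated and saturated cases. Multiplying by $w_i$ and summing, using $\sum_i w_i\,\nu_i(NR_i)=\nu(NR)\le\mu$, collapses the per-component estimates into a bound on $\nu(M)$. For part (b) the same closure property gives $\Ncal(M)\setminus M\subseteq NR$, but now the additive form of $(q,\mu)$-expansion only guarantees growth $\min(\mu,\nu(M))$, which $NR$ can just absorb; the bound $2\max(q,\mu)+\mu$ must therefore come from isolating the part of $M$ on which expansion is actually forced---discarding components with $\nu_i(M_i)>\half$ and a residual set of $\nu$-mass below $q$, each contributing at most $\max(q,\mu)$---and controlling the remainder through $\nu(\Ncal(M))\le\nu(M)+\mu$.

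The step I expect to fight with is exactly the case where some component has $\nu_i(M_i)>\half$, so that neither $M_i$ (multiplicative) nor $M$ (constant) satisfies the per-component $\le\half$ precondition of the expansion definition. Here $g$'s own majority occupies less than half of the component, and I would handle it by simultaneously expanding the majority piece (now $<\half$) and a maximal union of minority classes of mass $\le\half$---both closed under robust neighbors, hence both forced to grow only into $NR_i$---to conclude that $\nu_i(NR_i)$ must itself be a constant fraction of the component; this is what upgrades the crude bound $\nu_i(M_i)\le1$ into $\nu_i(M_i)\le\tfrac{c+1}{c-1}\nu_i(NR_i)$ and is responsible for the precise constant $\max\rbr{\tfrac{c+1}{c-1},3}$. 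Performing this while expanding \emph{unions of classes} rather than individual classes is essential: it keeps the estimate independent of the number of labels $K$, which is the subtlety distinguishing the present component-wise statement from a naive class-by-class application of \citet{wei2021theoretical}.
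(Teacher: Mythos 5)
Your core mechanism is the same as the paper's: robust points whose $\Bcal$-neighborhoods intersect must share a label, so any union of the sets $A_{ik}$ is closed under robust neighbors and can expand only into the non-robust set of mass $\le\mu$; the final $+\mu$ then comes from $\widetilde{M}\setminus M$ consisting of non-robust points. All of that is correct and matches the paper. The genuine gap is in the one place you flag yourself: components with $\nu_i(M_i)>\half$, and more broadly the construction of a set to which the expansion definitions actually apply. The paper's key device, which your proposal lacks, is a \emph{partition} of the minority labels $\{1,\dots,K\}\setminus\{y_i^{\text{Maj}}\}$ into two groups $J_{i1},J_{i2}$ with $\nu_i\rbr{\cup_{k\in J_{ij}}A_{ik}}\le\half$ for both $j$; this is always possible by a greedy assignment because $A_{i,y_i^{\text{Maj}}}$ outweighs each individual $A_{ik}$, so the two candidate measures obtained by adding a new class to either group sum to at most $1$. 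With this partition, part (b) follows by applying constant expansion to the two global halves $M^1=\cup_i M_i^1$ and $M^2=\cup_i M_i^2$ (each satisfies the per-component $\le\half$ precondition, or has mass below $q$), giving $\nu(M^j)\le\max(q,\mu)$ for each $j$ and hence $\nu(M)\le 2\max(q,\mu)$; part (a) is then obtained in the paper not by a separate multiplicative argument but by the implication that $(\half,c)$-multiplicative expansion yields $(\frac{\mu}{c-1},\mu)$-constant expansion.

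Your substitutes do not close this gap. For (b), the claim that the components with $\nu_i(M_i)>\half$ can be ``discarded'' at a cost of $\max(q,\mu)$ is unsupported: constant expansion requires $\nu_i(A)\le\half$ for \emph{every} $i$, so nothing in the definition bounds the total mass of such components without first splitting each $M_i$ into two sub-half pieces. For (a), expanding the majority piece together with a \emph{maximal} sub-half union $Q$ of minority classes only yields $\nu_i(Q)>\half-\nu_i(A_{i,y_i^{\text{Maj}}})$ and hence $\nu_i(NR_i)>\frac{c-1}{4}$ in the non-saturated case, which upgrades the crude bound only to $\nu_i(M_i)\le\frac{4}{c-1}\nu_i(NR_i)$; after adding the $+\mu$ this gives $\max\rbr{\frac{c+3}{c-1},3}\mu$, which misses the stated constant for $1<c<3$. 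The two-halves partition fixes both issues at once: applying expansion to each half and summing gives the factor $\frac{2}{c-1}$ (or $2$), and $\frac{2}{c-1}+1=\frac{c+1}{c-1}$ is exactly the stated constant.
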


Based on the bound on the minority set, our next lemma says that on most subpopulation components, the inconsistency between $g$ and $g_{tc}$ is no greater than the error of $g_{tc}$ plus a margin $\frac{\gamma}{2}$. Specifically, define 
\begin{align}\nonumber
I = & \{i\in \{1, \cdots, m\}|\\\nonumber
& \Pbb_{x\sim S_i}[g(x)\neq g_{tc}(x)] > \Pbb_{x\sim S_i}[g_{tc}(x)\neq y_i] + \frac{\gamma}{2}\}
\end{align}

and we have the following result

\begin{lemma}[Upper Bound on the Inconsistent Components $I$]\label{lem_I}

	Suppose $\Pbb_{\half(S+T)}[\widetilde{M}] \le C$, then 
	$$\Pbb_S[\cup_{i\in I}S_i] \le \frac{4C}{\gamma}.$$
\end{lemma}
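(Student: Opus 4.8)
The plan is to combine the optimality of $g$ in \eqref{algorithm_main} with the teacher margin (Assumption~\ref{asm_setting}(a)) to convert the hypothesis $\Pbb_{\half(S+T)}[\widetilde M]\le C$ into a bound on $\Pbb_S[\cup_{i\in I}S_i]$. Throughout, write $d_i := \Pbb_{x\sim S_i}[g(x)\ne g_{tc}(x)]$, $e_i := \Pbb_{x\sim S_i}[g_{tc}(x)\ne y_i]$, and $\mu_i^S := \Pbb_{x\sim S_i}[g(x)\ne y_i^{\mathrm{Maj}}]$, so that $i\in I$ means exactly $d_i > e_i + \tfrac\gamma2$, and note that $\sum_i \Pbb_S[S_i]\mu_i^S \le 2\,\Pbb_{\half(S+T)}[\widetilde M]\le 2C$ (the $S$-parts $S_i\cap\{g\ne y_i^{\mathrm{Maj}}\}$ are disjoint and sit inside $\widetilde M$, and the $T$-part of the mixture only adds mass).

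First I would exploit optimality. Since $g^\ast\in G$ is feasible for \eqref{algorithm_main} (its consistency error satisfies $R_{\Bcal}(g^\ast)<\mu$) and $g^\ast\equiv y_i$ on $S_i$, the minimizer $g$ obeys $L_{01}^S(g,g_{tc})\le L_{01}^S(g^\ast,g_{tc})$, which reads $\sum_i \Pbb_S[S_i]\,d_i \le \sum_i \Pbb_S[S_i]\,e_i$, i.e.\ $\sum_i \Pbb_S[S_i](d_i-e_i)\le 0$. Splitting this sum over $I$ and its complement and using $d_i-e_i>\tfrac\gamma2$ on $I$ (and disjointness of the $S_i$) gives $\tfrac\gamma2\,\Pbb_S[\cup_{i\in I}S_i] < \sum_{i\in I}\Pbb_S[S_i](d_i-e_i) \le \sum_{i\notin I}\Pbb_S[S_i](e_i-d_i)$. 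So it remains to bound the right-hand side by $2C$.

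The crux is the per-component inequality $e_i - d_i \le \mu_i^S$, valid for every $i$; this is the step where the teacher margin does the work and is the part I expect to be delicate. When $y_i^{\mathrm{Maj}}=y_i$ it is just the reverse triangle inequality $e_i\le d_i+\Pbb_{x\sim S_i}[g(x)\ne y_i]=d_i+\mu_i^S$. The hard case is $y_i^{\mathrm{Maj}}\ne y_i$, where $g$'s majority label is wrong and $\mu_i^S$ may be tiny: here I would lower-bound the inconsistency by $d_i\ge \Pbb_{x\sim S_i}[g(x)=y_i^{\mathrm{Maj}},\,g_{tc}(x)\ne y_i^{\mathrm{Maj}}]\ge (1-\mu_i^S)-\Pbb_{x\sim S_i}[g_{tc}(x)=y_i^{\mathrm{Maj}}]$, and then invoke Assumption~\ref{asm_setting}(a) with $k=y_i^{\mathrm{Maj}}$ to get $\Pbb_{x\sim S_i}[g_{tc}(x)=y_i^{\mathrm{Maj}}]\le (1-e_i)-\gamma$, yielding $d_i\ge e_i+\gamma-\mu_i^S$ and hence $e_i-d_i\le \mu_i^S-\gamma\le \mu_i^S$. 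The point is that without the margin a wrong-majority component could violate the inequality, so the margin is exactly what forces such a component to incur inconsistency commensurate with its minority mass.

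Finally I would assemble the pieces: summing the per-component inequality over $i\notin I$ gives $\sum_{i\notin I}\Pbb_S[S_i](e_i-d_i)\le \sum_i \Pbb_S[S_i]\mu_i^S \le 2C$. Chaining with the display from the optimality step yields $\tfrac\gamma2\,\Pbb_S[\cup_{i\in I}S_i]\le 2C$, i.e.\ $\Pbb_S[\cup_{i\in I}S_i]\le \tfrac{4C}{\gamma}$, as claimed. The conceptual takeaway is that optimality lets the small inconsistency ``budget'' spent on the well-behaved components (controlled by the minority mass $C$) pay for the large per-component violations on $I$, with the teacher margin guaranteeing that every component—even those where $g$ settles on the wrong majority—contributes at least its minority mass to that budget.
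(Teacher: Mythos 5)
Your proposal is correct and follows essentially the same route as the paper: the inequality $e_i - d_i \le \mu_i^S$ is exactly the paper's Lemma~\ref{lem_smallresult} (which the paper proves in one stroke via the margin bound $\Pbb_{x\sim S_i}[g_{tc}(x)\neq y_i]\le \Pbb_{x\sim S_i}[g_{tc}(x)\neq y_i^{\text{Maj}}]$ plus the triangle inequality, rather than your two-case analysis), and the rest—feasibility of $g^\ast$, splitting the optimality inequality over $I$ versus its complement, and absorbing $\Pbb_S[\widetilde M]\le 2C$—matches the paper's argument step for step.
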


Based on the above results, we are ready to bound the target error $\epsilon_T(g)$.

\begin{lemma}[Bounding the Target Error]\label{lem_target_err}
	Suppose $\Pbb_{\half(S+T)}[\widetilde{M}] \le C$. Let
	$$\epsilon_T^i(g) = \Pbb_T[T_i]\Pbb_{x\sim T}[g(x)\neq y_i]$$
	for $i$ in $\{1, \cdots, m\}$, so that $\epsilon_T(g) = \sum_{i=1}^{m}\epsilon_T^{i}(g)$. Then we can separately bound
	
	(a) $\sum_{i\in I}\epsilon_T^{i}(g)\le \frac{4rC}{\gamma}$

	(b) $\sum_{i\in \{1, \cdots, m\}\backslash I}\epsilon_T^i(g)\le \frac{4rC}{\gamma}$
	
	so that the combination gives
	$$\epsilon_T(g) \le \frac{8rC}{\gamma}.$$
\end{lemma}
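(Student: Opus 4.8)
The plan is to bound the target error component-by-component, splitting $\epsilon_T(g)=\sum_i\epsilon_T^i(g)$ according to whether $i\in I$, and to exploit that on each component $g$ is---up to the minority set $\widetilde{M}_i := \widetilde{M}\cap(S_i\cup T_i)$---constant and equal to its majority label $y_i^{\text{Maj}}$. Throughout I write $\mu_i := \Pbb_{\half(S+T)}[\widetilde{M}_i]$, so that $\sum_i \mu_i = \Pbb_{\half(S+T)}[\widetilde{M}] \le C$, and I read $\epsilon_T^i(g) = \Pbb_T[T_i\cap\{g\neq y_i\}]$, the target error localized to component $i$, which indeed sums to $\epsilon_T(g)$ since $g^\ast\equiv y_i$ on $T_i$. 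Two elementary facts are used repeatedly: Assumption~\ref{asm_setting}(b) gives $\Pbb_T[T_i]\le r\,\Pbb_S[S_i]$, and the identity $2\mu_i = \Pbb_S[\widetilde{M}_i\cap S_i] + \Pbb_T[\widetilde{M}_i\cap T_i]$ yields both $\Pbb_T[\widetilde{M}_i\cap T_i]\le 2\mu_i$ and $\Pbb_S[\widetilde{M}_i\cap S_i]\le 2\mu_i$. Finally, summing the ratio bound over $i$ forces $r\ge 1$ (since $1=\sum_i\Pbb_T[T_i]\le r\sum_i\Pbb_S[S_i]=r$), while $\gamma\le 1$ as a probability margin, so $\tfrac{4r}{\gamma}\ge 2$.

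For part (a) I would simply discard the conditional factor: for $i\in I$ we have $\epsilon_T^i(g)\le \Pbb_T[T_i]\le r\,\Pbb_S[S_i]$, so summing and invoking Lemma~\ref{lem_I} gives $\sum_{i\in I}\epsilon_T^i(g)\le r\,\Pbb_S[\cup_{i\in I}S_i]\le \tfrac{4rC}{\gamma}$.

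For part (b) the goal is the per-component estimate $\epsilon_T^i(g)\le \tfrac{4r}{\gamma}\mu_i$ for every $i\notin I$, which then sums to $\tfrac{4r}{\gamma}\sum_i\mu_i\le \tfrac{4rC}{\gamma}$. I split by the majority label. If $y_i^{\text{Maj}}=y_i$, then on $T_i$ the event $\{g\neq y_i\}$ coincides with $\widetilde{M}_i\cap T_i$, so $\epsilon_T^i(g)=\Pbb_T[\widetilde{M}_i\cap T_i]\le 2\mu_i\le \tfrac{4r}{\gamma}\mu_i$ by the inequality $\tfrac{4r}{\gamma}\ge 2$ noted above. If instead $y_i^{\text{Maj}}\neq y_i$, I would bound $\epsilon_T^i(g)\le \Pbb_T[T_i]\le r\,\Pbb_S[S_i]$ and then control the source mass $\Pbb_S[S_i]$ by $\mu_i$.

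The crux---and the step I expect to be the main obstacle---is precisely this control for wrong-majority components, where the margin assumption and the constraint defining $I$ must be combined. Writing $k := y_i^{\text{Maj}}\neq y_i$, on $S_i\cup T_i$ we have $\{g\neq k\}=\widetilde{M}_i$, so $\Pbb_{x\sim S_i}[g=k]=1-\Pbb_{x\sim S_i}[\widetilde{M}_i]$. A one-sided bound gives $\Pbb_{x\sim S_i}[g\neq g_{tc}]\ge \Pbb_{x\sim S_i}[g=k]-\Pbb_{x\sim S_i}[g_{tc}=k]$, while $i\notin I$ gives $\Pbb_{x\sim S_i}[g\neq g_{tc}]\le \Pbb_{x\sim S_i}[g_{tc}\neq y_i]+\tfrac{\gamma}{2}$. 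Subtracting and rearranging collapses to $\Pbb_{x\sim S_i}[g_{tc}=y_i]-\Pbb_{x\sim S_i}[g_{tc}=k]\le \Pbb_{x\sim S_i}[\widetilde{M}_i]+\tfrac{\gamma}{2}$; since Assumption~\ref{asm_setting}(a) forces the left side to be at least $\gamma$, we obtain $\Pbb_{x\sim S_i}[\widetilde{M}_i]\ge \tfrac{\gamma}{2}$. Hence $\tfrac{\gamma}{2}\Pbb_S[S_i]\le \Pbb_S[\widetilde{M}_i\cap S_i]\le 2\mu_i$, i.e. $\Pbb_S[S_i]\le \tfrac{4}{\gamma}\mu_i$, so $\epsilon_T^i(g)\le r\,\Pbb_S[S_i]\le \tfrac{4r}{\gamma}\mu_i$, completing the case. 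Adding the bounds from (a) and (b) gives $\epsilon_T(g)\le \tfrac{8rC}{\gamma}$.
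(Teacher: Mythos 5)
Your proposal is correct and follows essentially the same route as the paper's proof: part (a) drops the conditional factor and invokes Lemma~\ref{lem_I}, and part (b) splits on whether $y_i^{\text{Maj}}=y_i$, using the one-sided triangle inequality together with the margin assumption and the definition of $I$ to force $\Pbb_{S_i}[\widetilde{M}_i]\ge\gamma/2$ on wrong-majority components. Your explicit justification that $\tfrac{4r}{\gamma}\ge 2$ (via $r\ge 1$ and $\gamma\le 1$) is a small detail the paper only asserts.
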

Specically, Lemma~\ref{lem_target_err}(a) is obtained by directly using Lemma~\ref{lem_I}, and Lemma~\ref{lem_target_err}(b) is proved by a fine-grained analysis on the minority set.

Finally, we can plug in $C$ from Lemma~\ref{lem_minority} and the desired main results are obtained.

\section{Label Propogation in Generalized Subpopulation Shift}\label{sec:label_prop_generalized}
In this section, we show that the previous label propagation algorithm can be applied to a much more general setting than standard unsupervised domain adaptation. In a word, as long as we perform consistency regularization on an unlabeled dataset that covers both the teacher classifier's domain and the target domain, we can perform label propagation through the subpopulation of the unlabeled data.

Specifically, we still let $S$ be the source distribution where we have a teacher $g_{tc}$ on, and $T$ is the target distribution. The difference is that we have a ``covering'' distribution $U$ (Assumption~\ref{asm_setting_2}(c)) where we only make use of unlabeled data, and the expansion property is assumed to hold on $U$.

\begin{assumption}\label{asm_setting_2}
Assume the distributions are of the following structure: $\supp(S) = \cup_{i=1}^{m}S_i$, $\supp(T) = \cup_{i=1}^{m}T_i$, $\supp(U) = \cup_{i=1}^{m}U_i$, where $U_i \cap U_j = \emptyset$ for $i \neq j$, and $S_i \cup T_i\subset U_i$. Again, assume the ground truth class $g^\ast(x)$ for $x\in U_i$ is consistent (constant), denoted $y_i$.\ We abuse the notation to let $S_i$, $T_i$, $U_i$ also denote the conditional distribution of $S, T, U$ on the set $S_i, T_i, U_i$ respectively. We also make the following assumptions, with an additional (c) that says $U$ ``covers'' $S, T$.

(a)(b): Same as Assumption~\ref{asm_setting}(a)(b).

\textbf{(c) There exists a constant $\kappa \ge 1$ such that the measure $S_i$, $T_i$ are bounded by $\kappa U_i$. That is, for any $A \subset\Xcal$, 
$$\Pbb_{S_i}(A) \le \kappa \Pbb_{U_i}(A)\text{ and } \Pbb_{T_i}(A) \le \kappa \Pbb_{U_i}(A).$$}
\end{assumption}
The regularizer now becomes
$$R_{\Bcal}(g) := \Pbb_{x\sim U}[\exists x' \in \Bcal(x), \textrm{s.t. }g(x)\neq g(x')].$$
On can see that the main difference is that we replaced $\frac{1}{2}(S+T)$ from the previous domain adaptation with a general distribution $U$. Indeed, we assume expansion on $U$ and can establish bounds on $\epsilon_T(g)$.

\begin{definition}[Expansion on $U$] \label{def_expansion_2}

(1) We say $U$ satisfies $(a, c)$-multiplicative expansion for some constant $a \in (0, 1)$, $c > 1$, if for any $i$ and any subset $A \subset U$ with $\Pbb_{U_i}[A] \le a$, we have $\Pbb_{U_i}[\Ncal(A)]\ge \min\rbr{c\Pbb_{U_i}[A], 1}$.

(2) We say $U$ satisfies $(q, \xi)$-constant expansion for some constant $q, \xi \in (0, 1)$, if for any set $A \subset \Xcal$ with $\Pbb_{U_i}[A] \ge q$ and $\Pbb_{U_i} [A] \le \half, \forall i$, we have $\Pbb_{U}[\Ncal(A)]\ge \min\rbr{\xi, \Pbb_{U}[A]} + \Pbb_{U}[A]$.
\end{definition}

\begin{theorem}[Bound on Target Error with Multiplicative Expansion, Generalized] \label{thm_multiplicative_2}

Suppose Assumption~\ref{asm_setting_2} holds and $U$ satisfies $(\half, c)$-multiplicative expansion. Then the classifier obtained by (\ref{algorithm_main}) satisfies
$$\epsilon_T(g) \le \max\rbr{\frac{c+1}{c-1}, 3}\frac{4\kappa r\mu}{\gamma}.$$
\end{theorem}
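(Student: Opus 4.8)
The plan is to mirror the three-step argument behind Theorem~\ref{thm_multiplicative} (Lemmas~\ref{lem_minority}--\ref{lem_target_err}), replacing the mixed measure $\half(S+T)$ by the covering distribution $U$ throughout, and inserting the covering bound of Assumption~\ref{asm_setting_2}(c) at exactly the points where a source- or target-quantity must be compared against a $U$-quantity. Concretely, I would redefine the robust set $RS(g)$, the blocks $A_{ik}=RS(g)\cap U_i\cap\{x\mid g(x)=k\}$, the majority labels $y_i^{\text{Maj}}=\argmax_k\Pbb_U[A_{ik}]$, and the minority set $\widetilde{M}=\cup_i\bigl(U_i\cap\{x\mid g(x)\neq y_i^{\text{Maj}}\}\bigr)$ relative to the partition $\{U_i\}$ and the measure $U$. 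First I would prove the analogue of Lemma~\ref{lem_minority}: since both the regularizer $R_\Bcal(g)=\Pbb_U[\exists x'\in\Bcal(x),\,g(x)\neq g(x')]\le\mu$ and the expansion in Definition~\ref{def_expansion_2} are now stated purely in terms of $U$ and the partition $\{U_i\}$, the proof of Lemma~\ref{lem_minority} carries over verbatim with $U$ in place of $\half(S+T)$, giving $\Pbb_U[\widetilde{M}]\le C$ with $C=\max(\tfrac{c+1}{c-1},3)\mu$ under $(\half,c)$-multiplicative expansion. This is the only step that invokes expansion, and it is unchanged precisely because $U$ is the distribution on which expansion is assumed.

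The remaining two lemmas are where (c) does the work. The inconsistent set $I$ is defined exactly as before through the \emph{source} conditionals $S_i$; its defining inequality together with the triangle inequality still forces $\Pbb_{x\sim S_i}[g(x)\neq y_i]>\tfrac{\gamma}{2}$ for every $i\in I$. I would then apply (c) in the form $\Pbb_{S_i}[g\neq y_i]\le\kappa\,\Pbb_{U_i}[g\neq y_i]$, so each mislabeled source block forces a conditional $U$-minority mass of at least $\tfrac{\gamma}{2\kappa}$ on its component. This reproduces Lemma~\ref{lem_I}, but with the bound $\tfrac{4C}{\gamma}$ on $\Pbb_S[\cup_{i\in I}S_i]$ replaced by one of the form $\tfrac{2\kappa C}{\gamma}$: the factor $2$ that came from the $\half$ in $\half(S+T)$ is now supplied instead by the covering constant $\kappa$. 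For the target error I would split $\epsilon_T(g)=\sum_i\epsilon_T^i(g)$ into $i\in I$ and $i\notin I$ as in Lemma~\ref{lem_target_err}: the ratio bound (b) converts source masses into target masses with a factor $r$, while (c) converts the $U$-minority controlled above into target error with a factor $\kappa$. Each of the two sums is then bounded by $\tfrac{2\kappa r}{\gamma}C$, and adding them and substituting $C=\max(\tfrac{c+1}{c-1},3)\mu$ gives $\epsilon_T(g)\le\max(\tfrac{c+1}{c-1},3)\tfrac{4\kappa r\mu}{\gamma}$.

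The hard part will be the careful bookkeeping in the application of (c). In the original proof the factor $2$ appears cleanly from $\half(S+T)\ge\half S$ and $\half(S+T)\ge\half T$, i.e. the source and target measures are directly dominated by the measure on which expansion holds; here that domination is available only through (c), which compares $S_i,T_i$ to $U_i$, and I must arrange the conversions so that the component masses cancel correctly and the net effect is to replace the single factor $2$ by $\kappa$, rather than paying $\kappa$ twice (which would yield a spurious $\kappa^2$). A secondary subtlety, already present in Theorem~\ref{thm_multiplicative}, is the treatment of components whose majority label is wrong ($y_i^{\text{Maj}}\neq y_i$), where the conditional minority can be small even though the block is entirely mislabeled: these are controlled by combining the margin in (a) with the optimality of $g$ in \eqref{algorithm_main}, so that the large source disagreement of such a block—and hence, via (b), its target mass—is charged against the minority budget $C$. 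Once the constants are tracked through these conversions the bound follows, and the constant-expansion analogue (paralleling Theorem~\ref{thm_constant}) is obtained identically by substituting the constant-expansion minority bound for $C$.
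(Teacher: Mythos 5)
Your overall plan coincides with the paper's: the proof proceeds through exactly the three lemmas you describe (a minority-set bound from expansion on $U$, a bound on the inconsistent components $I$, and a split of $\epsilon_T(g)$ over $I$ and its complement), with the constants $\frac{2\kappa C}{\gamma}$ and $2\times\frac{2\kappa r C}{\gamma}$ appearing where you predict, and $\kappa$ is indeed paid once in each half of the sum rather than squared. However, your sketched justification of the Lemma~\ref{lem_I} analogue has a genuine gap. You propose a per-component charge: every $i\in I$ has $\Pbb_{x\sim S_i}[g(x)\neq y_i]>\gamma/2$, hence (via (c)) a conditional $U$-minority mass of at least $\gamma/(2\kappa)$, and summing these charges against the budget $C$ bounds $\Pbb_S[\cup_{i\in I}S_i]$. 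This fails on precisely the components you flag as a ``secondary subtlety'': if $y_i^{\text{Maj}}\neq y_i$ and $g$ is (robustly) equal to $y_i^{\text{Maj}}$ on essentially all of $U_i$, then $\widetilde{M}_i=U_i\cap\{g\neq y_i^{\text{Maj}}\}$ can have measure arbitrarily close to zero even though $S_i$ is entirely mislabeled and $i\in I$; the set $\{g\neq y_i\}$ is then disjoint from the minority set, so there is nothing to charge. No per-component argument can control the total mass of such components; the only available control is global. The paper's Lemma~\ref{lem_I_2} instead sums the identity $L_{01}^S(g,g_{tc})=\sum_i\Pbb_S[S_i]\Pbb_{x\sim S_i}[g(x)\neq g_{tc}(x)]$, lower-bounds the $i\in I$ terms by the definition of $I$ and the $i\notin I$ terms by $\Pbb_{x\sim S_i}[g_{tc}(x)\neq y_i]-\Pbb_{S_i}[\widetilde M_i]$ (a triangle-inequality lemma using the margin), and compares against the optimality constraint $L_{01}^S(g,g_{tc})\le L_{01}^S(g^\ast,g_{tc})$; the excess disagreement $\frac{\gamma}{2}\sum_{i\in I}\Pbb_S[S_i]$ is then absorbed by $\sum_{i\notin I}\Pbb_S[\widetilde M_i]\le\kappa\,\Pbb_U[\widetilde M]\le\kappa C$. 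So the optimality of $g$ enters at the $I$-bound itself, not only in the target-error step.

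A secondary bookkeeping point: assumption (c) as you invoke it (a bound on conditional measures $\Pbb_{S_i}(A)\le\kappa\Pbb_{U_i}(A)$) would only convert your per-component charges into a bound on $\sum_{i\in I}\Pbb_U[U_i]$, whereas what is needed downstream (to apply the ratio $r$ of Assumption~\ref{asm_setting_2}(b)) is a bound on $\sum_{i\in I}\Pbb_S[S_i]$. The proof requires (c) in the unconditional, restriction-of-measure form $\Pbb_S[A\cap S_i]\le\kappa\,\Pbb_U[A\cap U_i]$ (which is how the paper uses it, e.g.\ $\Pbb_S[\widetilde M]\le\kappa\Pbb_U[\widetilde M]$ and $\Pbb_T[\widetilde M_i]\le\kappa\Pbb_U[\widetilde M_i]$), and this is what recovers $\kappa=2$ for $U=\half(S+T)$. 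With the global $L_{01}$ comparison in place of your per-component charge, and (c) applied in this unconditional form, the rest of your outline — including the case split on $y_i=y_i^{\text{Maj}}$ versus $y_i\neq y_i^{\text{Maj}}$ for $i\notin I$ and the final assembly $\epsilon_T(g)\le\frac{4\kappa rC}{\gamma}$ with $C=\max\rbr{\frac{c+1}{c-1},3}\mu$ — goes through as you describe.
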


\begin{theorem}[Bound on Target Error with Constant Expansion, Generalized] \label{thm_constant_2}

Suppose Assumption~\ref{asm_setting} holds and $U$ satisfies $(q, \mu)$-constant expansion.  Then the classifier obtained by (\ref{algorithm_main}) satisfies
$$\epsilon_T(g) \le (2\max(q, \mu) + \mu)\frac{4\kappa r\mu}{\gamma}.$$
\end{theorem}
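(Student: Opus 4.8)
The plan is to follow the proof of Theorem~\ref{thm_constant} line by line, replacing the regularizing measure $\half(S+T)$ everywhere by $U$, and to invoke the covering condition Assumption~\ref{asm_setting_2}(c) at exactly the two points where the original argument passes between the regularized distribution and the source/target distributions. Concretely, I would re-instantiate all of the bookkeeping from Section~\ref{sec:proof_sketch} on $U$: the robust set $RS(g)$, the cells $A_{ik}=RS(g)\cap U_i\cap\{x\mid g(x)=k\}$, the $U$-majority labels $y_i^{\mathrm{Maj}}=\argmax_k\Pbb_U[A_{ik}]$, and the minority set $\widetilde M=\cup_i\rbr{U_i\cap\{x\mid g(x)\neq y_i^{\mathrm{Maj}}\}}$, with the neighborhood $\Ncal$ now restricted to $U_i$. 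The three-lemma skeleton (minority-set bound, inconsistent-component bound, target-error bound) then carries over, and my job is to check that the only new constant appearing is the $\kappa$ from covering.

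Step one is the analogue of Lemma~\ref{lem_minority}(b). Since the constraint in \eqref{algorithm_main} now reads $R_\Bcal(g)\le\mu$ with $R_\Bcal$ defined on $U$, and since the $(q,\mu)$-constant expansion of Definition~\ref{def_expansion_2} is assumed on $U$, this step is purely internal to $U$ and involves neither $S$ nor $T$. I therefore expect the proof of Lemma~\ref{lem_minority}(b) to transfer verbatim under the substitution $\half(S+T)\rightsquigarrow U$, yielding $\Pbb_U[\widetilde M]\le 2\max(q,\mu)+\mu=:C$. The only thing to verify is that each robust minority cell still satisfies $\Pbb_{U_i}[M_i]\le\half$, so that constant expansion is applicable, which holds by definition of the majority label.

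Steps two and three are where Assumption~\ref{asm_setting_2}(c) enters. For the inconsistent-component set $I$ (defined exactly as before via the teacher margin on $S_i$), the argument of Lemma~\ref{lem_I} shows that $i\in I$ forces $\Pbb_{S_i}[\widetilde M_i]\ge\gamma/2$; I would then use (c) to convert this source mass into $U$-mass, $\Pbb_S[\widetilde M_i\cap S_i]\le\kappa\,\Pbb_U[\widetilde M_i]$, and sum against $\Pbb_U[\widetilde M]\le C$ to obtain $\Pbb_S[\cup_{i\in I}S_i]\le\frac{2\kappa C}{\gamma}$. Together with the domain-shift ratio $t_i\le r s_i$ this gives the part-(a) bound $\sum_{i\in I}\epsilon_T^i\le\frac{2\kappa rC}{\gamma}$. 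For $i\notin I$ I would split on whether $y_i^{\mathrm{Maj}}=y_i$: when it holds, $\epsilon_T^i=t_i\Pbb_{T_i}[\widetilde M_i]$ is converted directly to $\kappa\,\Pbb_U[\widetilde M_i]$ by (c); when it fails, the margin again forces $\Pbb_{S_i}[\widetilde M_i]\ge\gamma/2$ and I bound $\epsilon_T^i\le t_i\le rs_i\le\frac{2\kappa r}{\gamma}\Pbb_U[\widetilde M_i]$. Since $r\ge 1$ and $\gamma\le 1$ make $\frac{2\kappa r}{\gamma}\ge\kappa$, both cases give $\epsilon_T^i\le\frac{2\kappa r}{\gamma}\Pbb_U[\widetilde M_i]$, so part (b) also contributes $\frac{2\kappa rC}{\gamma}$. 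Adding the two parts and substituting $C$ yields $\epsilon_T(g)\le\frac{4\kappa r}{\gamma}\rbr{2\max(q,\mu)+\mu}$, as claimed.

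The step I expect to be the real obstacle is the measure transfer in Steps two and three. Assumption~\ref{asm_setting_2}(c) is stated for the conditional distributions $\Pbb_{S_i},\Pbb_{T_i},\Pbb_{U_i}$, so to make $\kappa$ cleanly play the role that the mixture weight $\tfrac12$ played in the original proof (note $8r$ becomes $4\kappa r$, i.e.\ one factor of $2$ is replaced by $\kappa$), I must be careful that the comparison is applied to the correct normalized-versus-unnormalized masses: the bound I actually need is $\Pbb_T[A\cap T_i]\le\kappa\,\Pbb_U[A\cap U_i]$ at the level of the component weights $t_i,u_i$, which is exactly the sense in which ``$U$ covers $S,T$''. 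Getting this bookkeeping right for every component simultaneously, and confirming that the two-case split for $i\notin I$ exhausts all components outside $I$, is the delicate part; everything else is a mechanical re-run of Section~\ref{sec:proof_sketch}.
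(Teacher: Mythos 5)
Your overall skeleton is exactly the paper's (the paper in fact proves the generalized theorems first and obtains Theorems~\ref{thm_multiplicative} and \ref{thm_constant} as the $\kappa=2$ special case), and your accounting of where $\kappa$ enters, the case split for $i\notin I$, and the final constant $\frac{4\kappa r}{\gamma}\rbr{2\max(q,\mu)+\mu}$ all match. But two of your steps, as stated, would fail. The serious one is Step two: you claim that ``$i\in I$ forces $\Pbb_{S_i}[\widetilde{M}_i]\ge\gamma/2$.'' This is false. Take $y_i=1$, $y_i^{\text{Maj}}=2$, $g\equiv 2$ on $S_i$ (so $\Pbb_{S_i}[\widetilde{M}_i]=0$), and $g_{tc}=1$ on $60\%$ of $S_i$ and $=2$ on the rest (margin $\gamma=0.2$); then $\Pbb_{x\sim S_i}[g(x)\neq g_{tc}(x)]=0.6>\Pbb_{x\sim S_i}[g_{tc}(x)\neq y_i]+\gamma/2=0.5$, so $i\in I$ while the minority mass on $S_i$ is zero: a component can be uniformly flipped to the wrong majority label and contribute nothing to $\widetilde{M}$. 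The paper's Lemma~\ref{lem_I_2} does not argue per component. It uses the \emph{optimality} of $g$ in (\ref{algorithm_main}), namely $L_{01}^S(g,g_{tc})\le L_{01}^S(g^\ast,g_{tc})$ (valid because $g^\ast$ is feasible, $R_\Bcal(g^\ast)<\mu$), decomposes $L_{01}^S(g,g_{tc})$ over $I$ and its complement, gains $+\gamma/2$ per unit of source mass on $I$ and loses at most $\Pbb_{S_i}[\widetilde{M}_i]$ on the complement (Lemma~\ref{lem_smallresult}), and cancels to get $\frac{\gamma}{2}\Pbb_S[\cup_{i\in I}S_i]\le\Pbb_S[\widetilde{M}]\le\kappa\,\Pbb_U[\widetilde{M}]$. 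Your proposal never invokes the objective of (\ref{algorithm_main}) anywhere, and without it the conclusion is simply unattainable: a consistency-feasible $g$ that relabels every component with a single wrong label has $\widetilde{M}=\emptyset$ and target error $1$.

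The second, smaller gap is in Step one: you justify applying constant expansion by asserting $\Pbb_{U_i}[M_i]\le\half$ ``by definition of the majority label.'' Only the individual cells $A_{ik}$ satisfy this; the union $M_i$ over all $K-1$ non-majority labels can have conditional mass up to $\frac{K-1}{K}>\half$. The paper's fix (Lemma~\ref{lem_partition_of_Mi}) greedily splits the non-majority labels into two groups, each of conditional robust mass at most $\half$, and applies expansion to each half separately --- this is precisely where the factor $2$ in $2\max(q,\mu)$ comes from, so your final constant is right but your stated route to it is not. (Separately, what the argument actually yields is $\frac{4\kappa r}{\gamma}\rbr{2\max(q,\mu)+\mu}$; the extra factor of $\mu$ in the displayed bound of Theorem~\ref{thm_constant_2} appears to be a typo, cf.\ Lemma~\ref{lem_target_err_2}.)
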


Choosing special cases of the structure $U$, we can naturally obtain the following special cases that correspond to the models shown in Figure~\ref{figure2}.

\begin{figure*}[ht]
    \centering
        \includegraphics[width=1\textwidth]{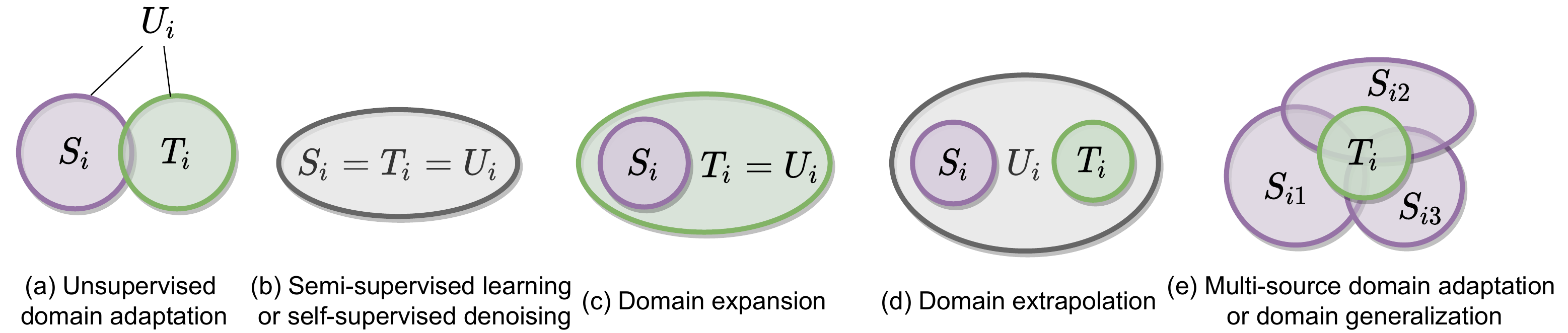}%
    \caption{Settings of generalized subpopulation shift in Section~\ref{sec:label_prop_generalized}. The figures only draw one subpopulation $i$ for each model.} %
    \label{figure2}
\end{figure*}

\begin{enumerate}
\item \textbf{Unsupervised domain adaptation} (Figure~\ref{figure2}(a)).
When $U_i = \frac{1}{2}(S_i+T_i)$, we immediately obtain the results in Section~\ref{sec:main_theorem} by plugging in $\kappa = 2$. Therefore, Theorem~\ref{thm_multiplicative} and ~\ref{thm_constant} is just a special case of Theorem~\ref{thm_multiplicative_2} and \ref{thm_constant_2}. 

\item \textbf{Semi-supervised learning or self-supervised denoising} (Figure~\ref{figure2}(b)).
When $S_i = T_i = U_i$, the framework becomes the degenerate version of learning a $g$ from a $g_{tc}$ in a single domain. $g_{tc}$ can be a pseudo-labeler in the semi-supervised learning or some other pre-trained classifier self-supervised denoising. Our results improve upon \citet{wei2021theoretical} under this case as discussed in Remark~\ref{remark1}, \ref{remark2}.

\item \textbf{Domain expansion} (Figure~\ref{figure2}(c)).
When $T_i = U_i$, this becomes a problem between semi-supervised learning and domain adaptation, and we call it domain expansion. That is, the source $S$ is a sub-distribution of $T$ where we need to perform well. Frequently, we have a big unlabeled dataset and the labeled data is only a specifc part. 

\item \textbf{Domain extrapolation} (Figure~\ref{figure2}(d)).
When $S_i \cup T_i$ does not satisfy expansion by itself, e.g. they are not connected by $\Bcal(\cdot)$, but they are connected through $U_i$, we can still obtain small error on $T$. We term this kind of task domain extrapolation, where we have a small source and small target distribution that is not easy to directly correlate, but is possible through a third and bigger unlabeled dataset $U$ where label information can propagate.
\item \textbf{Multi-Source domain adaptation or domain generalization} (Figure~\ref{figure2}(e)).
We have multiple source domains and take $U$ as the union (average measure) of all source domains. Learning is guaranteed if in the input space or some representation space, $U$ can successfully ``cover'' $T$, the target distribution in multi-source domain adaptation or the test distribution in domain generalization. Also, as the framework suggests, we do not require all the source domains to be labeled, depending on the specific structure. 
\end{enumerate}

The general label propogation framework proposed in this section is widely applicable in many practical scenarios, and would also be an interesting future work. The full proof of the theorems in this section is in Appendix~\ref{sec:full_proof}.

\section{Experiments}
\setcounter{table}{1}
\begin{table*}[h]
\begin{center}
	\scriptsize
\begin{tabular}{@{}cccccccc@{}}

\toprule
Method       & { A $\to$ W} & { D $\to$ W} & { W $\to$ D} & { A $\to$ D} & { D $\to$ A} & { W $\to$ A} & Average \\ \midrule
MDD          & 94.97$\pm$0.70                                       & 98.78$\pm$0.07                                       & 100$\pm$0                                            & 92.77$\pm$0.72                                       & 75.64$\pm$1.53                                       & 72.82$\pm$0.52                                       & 89.16   \\
MDD+FixMatch & 95.47$\pm$0.95                                       & 98.32$\pm$0.19                                       &                   100$\pm$0                                   & 93.71$\pm$0.23                                       & 76.64$\pm$1.91                                       & 74.93$\pm$1.15                                       & \textbf{89.84}   \\ \bottomrule
\end{tabular}
\caption{\label{tab:office31}Performance of MDD and MDD+FixMatch on Office-31 dataset.}
\end{center}

\resizebox{\textwidth}{!}{
\setlength{\tabcolsep}{0.8mm}{
\begin{tabular}{@{}cccccccccccccc@{}}
\toprule
Method       & Ar $\to$ Cl        & Ar $\to$ Pr        & Ar $\to$ Rw        & Cl $\to$ Ar        & Cl $\to$ Pr        & Cl $\to$ Rw        & Pr $\to$ Ar        & Pr $\to$ Cl        & Pr $\to$ Rw        & Rw $\to$ Ar         & Rw $\to$ Cl        & Rw $\to$ Pr        & Average \\ \midrule
MDD          & 54.9$\pm$0.7 & 74.0$\pm$0.3 & 77.7$\pm$0.3 & 60.6$\pm$0.4 & 70.9$\pm$0.7 & 72.1$\pm$0.6 & 60.7$\pm$0.8 & 53.0$\pm$1.0 & 78.0$\pm$0.2 & 71.8$\pm$0.4 & 59.6$\pm$0.4 & 82.9$\pm$0.3 &   68.0      \\
MDD+FixMatch & 55.1$\pm$0.9 & 74.7$\pm$0.8 & 78.7$\pm$0.5 & 63.2$\pm$1.3 & 74.1$\pm$1.8 & 75.3$\pm$0.1 & 63.0$\pm$0.6 & 53.0$\pm$0.6 & 80.8$\pm$0.4 &  73.4$\pm$0.1  &  59.4$\pm$0.7  &   84.0$\pm$0.5   &   \textbf{69.6} \\ \bottomrule
\end{tabular}
}}
\caption{\label{tab:office_home}Performance of MDD and MDD+FixMatch on Office-Home dataset.}
\end{table*}
\setcounter{table}{0}
\label{sec:exps}
In this section, we first conduct experiments on a dataset that is constructed to simulate natural subpopulation shift. Then we generalize the aspects of subpopulation shift to classic unsupervised domain adaptation datasets by combining distributional matching methods and consistency-based label propagation method.
\subsection{Subpopulation Shift Dataset}
We empirically verify that label propagation via consistency regularization works well for subpopulation shift tasks. Towards this goal, we constructed an Unsupervised Domain Adaptation (UDA) task using the challenging ENTITY-30 task from BREEDS tasks~\citep{santurkar2021breeds}, and directly adapt FixMatch~\citep{sohn2020fixmatch}, an existing consistency regularization method for \emph{semi-supervised learning} to the subpopulation shift tasks. The main idea of FixMatch is to optimize the supervised loss on weak augmentations of source samples, plus consistency regularization, which encourages the prediction of the classifier on strong augmentations of a sample to be the same to the prediction on weak augmentations of the sample\footnote{Empirically, FixMatch also combines self-training techniques that take the hard label of the prediction on weak augmentations. We also use Distribution Alignment~\citep{berthelot2019remixmatch} mentioned in Section 2.5 of the FixMatch paper.}. In contrast to semi-supervised learning where the supports of unlabeled data and labeled data are inherently the same, in subpopulation shift problems, the support sets of different domains are disjoint. To enable label propagation, we need a good feature map to enable label propagation on the \emph{feature space}. We thus make use of the feature map learned by a self-supervised learning algorithm SwAV~\citep{caron2020unsupervised}, which simultaneously \emph{clusters} the data while \emph{enforcing consistency} between cluster assignments produced for different augmentations of the same image. This representation has two merits; first, it encourages subpopulations with similar representations to cluster in the feature space. Second, it enforces the augmented samples to be close in the feature space. We expect that subclasses from the same superclass will be assigned to the same cluster and thus enjoy the expansion property to a certain extent in the feature space. We defer the detailed experimental settings to Appendix~\ref{appsec:detail_exp} and report the results here.

\begin{table}[H]
	\label{tab:exps}
	\centering
	\begin{tabular}{cll}
		\toprule
		Method          & Source Acc & Target Acc \\ 
		\midrule
		Train on Source &  91.91$\pm$0.23          &     56.73$\pm$0.32       \\
		DANN~\citep{ganin2016domain}            &     92.81$\pm$0.50       &      61.03$\pm$4.63      \\
		MDD~\citep{zhang2019bridging}             &      92.67$\pm$0.54      &      63.95$\pm$0.28      \\
		FixMatch~\citep{sohn2020fixmatch}        &     90.87$\pm$0.15       &     72.60$\pm$0.51    \\
		\bottomrule  
	\end{tabular}
	\caption{Comparison of performance on ENTITY-30 (Acc refers to accuracy which is measured by percentage).}
\end{table}

We compare the performance of the adaptation of FixMatch with popular distributional matching methods, i.e., DANN~\citep{ganin2016domain} and MDD~\citep{zhang2019bridging}\footnote{We use the implementation from \citet{dalib}, which shows that MDD has the best performance among the evaluated methods.}. For a fair comparison, all models are finetuned from SwAV representation. As shown in Table~\ref{tab:exps}, the adaptation with FixMatch obtains significant improvement upon the baseline method that only trains on the source domain by more than $15\%$ points on the target domain. FixMatch also outperforms distributional matching methods by more than $8\%$. The results suggest that unlike previous distributional matching-based methods, consistency regularization-based methods are preferable on domain adaptation tasks when encountering subpopulation shift. This is also aligned with our theoretical findings.

\subsection{Classic Unsupervised Domain Adaptation Datasets}
In this section we conduct experiments on classic unsupervised domain adaptation datasets, i.e., Office-31~\citep{saenko2010adapting}, Office-Home~\citep{venkateswara2017deep}, where source and target domains mainly differ in style, e.g., artistic images to real-world images. Distributional matching methods seek to learn an invariant representation which removes confounding information such as the style. Since the feature distributions of different domains are encouraged to be matched, the supports of different domains in the feature space are overlapped which enables label propagation. In addition, subpopulation shift from source to target domain may remain even if the styles are unified in the feature space. This inspires us to combine distributional matching methods and label propagation.

As a preliminary attempt, we directly combine MDD~\citep{zhang2019bridging} and FixMatch~\citep{sohn2020fixmatch} to see if there is gain upon MDD. Specifically, we first learn models using MDD on two classic unsupervised domain adaptation datasets, Office-31 and Office-Home. Then we finetune the learned model using FixMatch (with Distribution Alignment extension as described in previous subsection). The results in Table~\ref{tab:office31}, \ref{tab:office_home} confirm that finetuning with FixMatch can improve the performance of MDD models. The detailed experimental settings can be found in Appendix~\ref{appsec:detail_exp}.

\section{Conclusion}\label{sec:conclusion}

In this work, we introduced a new theoretical framework of learning under subpopulation shift through label propagation, providing new insights on solving domain adaptation tasks. We provided accuracy guarantees on the target domain for a consistency regularization-based algorithm using a fine-grained analysis under the expansion assumption. Our generalized label propagation framework in Section~\ref{sec:label_prop_generalized} subsumes the previous domain adaptation setting and also provides an interesting direction for future work.
\section*{Acknowledgements}
JDL acknowledges support of the ARO under MURI Award W911NF-11-1-0303, the Sloan Research Fellowship, NSF CCF 2002272, and an ONR Young Investigator Award. QL is supported by NSF \#2030859 and the Computing Research Association for the CIFellows Project. We thank Prof. Yang Yuan for providing computational resources. We also thank Difan Zou for pointing out a mistake in the original proof of Lemma~\ref{lem_minority_2} which is now corrected in the revision.

\bibliography{ref}
\bibliographystyle{apalike}

\newpage

\appendix
\section{Proof of Theorem~\ref{thm_multiplicative}, \ref{thm_constant}, \ref{thm_multiplicative_2}, and \ref{thm_constant_2}} \label{sec:full_proof}

Note that in Section~\ref{sec:label_prop_generalized}, by taking $U = \half(S+T)$, in Assumption~\ref{asm_setting_2}(c) we have $\kappa = 2$. By plugging in $\kappa$, Theorem~\ref{thm_multiplicative} and \ref{thm_constant} immediately becomes the corollary of Theorem~\ref{thm_multiplicative_2} and \ref{thm_constant_2}. Therefore, we only provide a full proof for Theorem~\ref{thm_multiplicative_2} and \ref{thm_constant_2} here.

First, similar to Section~\ref{sec:proof_sketch}, we give a proof sketch for Theorem~\ref{thm_multiplicative_2} and \ref{thm_constant_2}, which includes the corresponding definitions and lemmas for this generalized setting.

\subsection{Proof Sketch for Theorem~\ref{thm_multiplicative_2} and \ref{thm_constant_2}}\label{sec:proof_sketch_2}
To prove the theorems, we first introduce some concepts and notations. 

A point $x\in \Xcal$ is called \emph{robust} w.r.t. $\Bcal$ and $g$ if for any $x'$ in $\Bcal(x)$, $g(x) = g(x')$. Denote 
$$RS(g) := \{x| g(x) = g(x'), \forall x' \in \Bcal(x)\},$$
which is called the \emph{robust set} of $g$. Let 
$$A_{ik} := RS(g)\cap U_i\cap\{x|g(x)=k\}$$
for $i\in\{1, \cdots, m\}$, $k\in \{1, \cdots, K\}$, and they form a partition of the set $RS(g)$. Denote 
$$y_i^{\text{Maj}} := \argmax_{k\in\{1, \cdots, K\}}\Pbb_{U}[A_{ik}],$$ 
which is the majority class label of $g$ in the robust set on $U_i$. We also call 
$$M_i := \bigcup_{k\in\{1, \cdots, K\}\backslash \{y_i^{\text{Maj}}\}}A_{ik}$$
and $M := \bigcup_{i=1}^{m}M_i$
the \emph{minority robust set} of $g$.
In addition, let
$$\widetilde{M}_i := U_i \cap \{x|g(x) \neq y_i^{\text{Maj}}\}$$
and 
$\widetilde{M} := \bigcup_{i=1}^{m}\widetilde{M}_i$
be the \emph{minority set} of $g$, which is superset to the minority robust set.

The expansion property can be used to control the total population of the minority set.

\begin{lemma}[Upper Bound of Minority Set]\label{lem_minority_2} For the classifier $g$ obtained by (\ref{algorithm_main}), $\Pbb_{U}[\widetilde{M}]$ can be bounded as follows:
	
	(a) Under $(\half, c)$-multiplicative expansion, we have $\Pbb_{U}[\widetilde{M}] \le \max\rbr{\frac{c+1}{c-1}, 3}\mu$. 
	
	(b) Under $(q, \mu)$-constant expansion, we have $\Pbb_{U}[\widetilde{M}] \le 2\max\rbr{q, \mu} + \mu$. 
\end{lemma}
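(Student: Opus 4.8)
\noindent The plan is to reduce the statement to controlling the \emph{minority robust set} $M$ and the non-robust set, and then to upgrade the local guarantee $R_{\Bcal}(g)\le\mu$ into a global bound through the expansion property, handled one component $U_i$ at a time. First I would record the two elementary reductions. By definition $R_{\Bcal}(g)=\Pbb_{U}[RS(g)^c]\le\mu$, so the non-robust set is small. Moreover $\widetilde{M}_i\setminus M_i = \widetilde{M}_i\cap RS(g)^c \subseteq RS(g)^c$, hence $\Pbb_{U}[\widetilde{M}]\le \Pbb_{U}[M]+\mu$. Thus it suffices to bound $\Pbb_{U}[M]=\sum_i w_i\rho_i$, where I write $w_i=\Pbb_{U}[U_i]$, $\rho_i=\Pbb_{U_i}[M_i]$, and $\nu_i=\Pbb_{U_i}[RS(g)^c\cap U_i]$, noting $\sum_i w_i\nu_i\le\mu$.

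\noindent The engine of the proof is a robustness observation: if $A$ is a union of the robust classes $A_{ik}$ inside a single $U_i$, then any robust point $x'$ whose $\Bcal$-ball meets the $\Bcal$-ball of some $x\in A$ must satisfy $g(x')=g(x)$, so $x'$ lies in the same class and therefore in $A$. Consequently $\Ncal(A)\setminus A\subseteq RS(g)^c\cap U_i$, giving the crucial upper bound $\Pbb_{U_i}[\Ncal(A)]\le \Pbb_{U_i}[A]+\nu_i$. I will then contradict this against the \emph{lower} bound supplied by expansion.

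\noindent For part (a) (multiplicative expansion) the main case is $\rho_i\le\half$: applying $(\half,c)$-expansion to $A=M_i$ gives $\min\rbr{c\rho_i,1}\le \Pbb_{U_i}[\Ncal(M_i)]\le \rho_i+\nu_i$, which yields $\rho_i\le \nu_i/(c-1)$ when $c\rho_i\le1$, and otherwise forces $\nu_i\ge 1-\rho_i\ge\half\ge\rho_i$; either way $\rho_i=O(\nu_i)$. Summing $w_i\rho_i$ and adding the $\mu$ from the reduction produces a bound of the form $\rbr{1+O(1/(c-1))}\mu$, matching $\max\rbr{\tfrac{c+1}{c-1},3}\mu$ after the constant is tightened. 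For part (b) (constant expansion) I would instead apply the additive $(q,\mu)$-expansion to a union-of-classes subset $A$ of the minority robust set with $\Pbb_{U_i}[A]\le\half$: expansion demands growth of at least $\min\rbr{\mu,\Pbb_{U}[A]}$, while robustness caps the growth at the buffer $\le\mu$, so the minority mass is pinned near the threshold $\max(q,\mu)$ (sets below $q$ escape the precondition and are already small), leading to $\Pbb_{U}[M]\le 2\max(q,\mu)$ and the stated bound.

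\noindent The step I expect to be the main obstacle is the case of components with $\rho_i>\half$, where expansion cannot be applied to $M_i$ directly because its conditional mass exceeds the admissible range. Here the majority class has mass below $\half$, and a single-class application of expansion shows every individual class has mass $O(\nu_i)$; I would then form a greedy union of minority classes whose mass sits just below $\half$ and observe that it cannot expand past the available buffer, forcing $\nu_i$ to be a constant fraction and hence $\rho_i=O(\nu_i)$ for these components as well. This $\rho_i>\half$ regime is precisely where the two bounds differ in difficulty, and in the constant-expansion case it is compounded by the fact that the additive gain only \emph{matches} (rather than strictly exceeds) the buffer when $\xi=\mu$, so obtaining the sharp constant requires careful threshold bookkeeping across the $q$- and $\mu$-terms.
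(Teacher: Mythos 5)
Your skeleton coincides with the paper's: reduce $\Pbb_{U}[\widetilde{M}]$ to the minority robust set $M$ plus the non-robust buffer ($\Pbb_U[\widetilde{M}]\le\Pbb_U[M]+\mu$), observe that the neighborhood of any union of robust classes $A_{ik}$ can only grow into non-robust points, and play the expansion lower bound against the cap $R_{\Bcal}(g)\le\mu$. Where you diverge---and where the gap lies---is the handling of components with $\Pbb_{U_i}[M_i]>\half$. The paper resolves this with one combinatorial device (Lemma~\ref{lem_partition_of_Mi}): because the majority robust class has at least the mass of every minority class, the minority classes of each $U_i$ can be greedily split into two groups $M_i^1,M_i^2$ with $\Pbb_{U_i}[M_i^j]\le\half$; the global unions $M^1,M^2$ then each meet the precondition of $(q,\mu)$-constant expansion, each is forced below $\max(q,\mu)$, and the factor $2$ in part (b) is literally the two halves. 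The multiplicative case is then obtained as a corollary of the constant case via the implication that $(\half,c)$-multiplicative expansion yields $(\frac{\mu}{c-1},\mu)$-constant expansion, and this reduction is exactly what produces the constant $\max\rbr{\frac{c+1}{c-1},3}$.

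Your substitutes do not quite close either part. In part (b), the assertion $\Pbb_U[M]\le 2\max(q,\mu)$ presupposes precisely the two-piece partition above: constant expansion (Definition~\ref{def_expansion_2}(2)) is a \emph{global} property of a single set $A$ subject to per-component mass caps, so the per-component style of argument you use in part (a) does not transfer, and you never exhibit two sets whose union covers $M$ and which each satisfy $\Pbb_{U_i}[\cdot]\le\half$ for every $i$. That partition is the one nontrivial structural step of the proof and it is missing. In part (a), your greedy single union $A$ with $\Pbb_{U_i}[A]$ just below $\half$ does yield $\nu_i\ge\min\rbr{\frac{c-1}{4},\half}$ and hence $\rho_i=O(\nu_i)$ on the bad components, but tracking the constants gives $\Pbb_U[\widetilde{M}]\le\max\rbr{\frac{c+3}{c-1},3}\mu$, which is strictly weaker than the stated $\max\rbr{\frac{c+1}{c-1},3}\mu$ for every $1<c<3$ (e.g.\ $5\mu$ versus $3\mu$ at $c=2$); ``tightening the constant'' is not a routine afterthought here. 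The clean fix is to prove the two-bucket partition (the majority-class comparison $\Pbb_i[\cup_{k\in J_{i2}\cup\{k_0\}}A_{ik}]\le\Pbb_i[\cup_{k\in J_{i2}\cup\{y_i^{\text{Maj}}\}}A_{ik}]$ is what makes the greedy assignment never get stuck), run the constant-expansion argument on $M^1$ and $M^2$, and deduce the multiplicative bound from the constant one rather than arguing it directly.
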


Based on the bound on the minority set, our next lemma says that on most subpopulation components, the inconsistency between $g$ and $g_{tc}$ is no greater than the error of $g_{tc}$ plus a margin $\frac{\gamma}{2}$. Specifically, define 
\begin{align}\nonumber
I = & \{i\in \{1, \cdots, m\}|\\\nonumber
& \Pbb_{x\sim S_i}[g(x)\neq g_{tc}(x)] > \Pbb_{x\sim S_i}[g_{tc}(x)\neq y_i] + \frac{\gamma}{2}\}
\end{align}
and we have the following result

\begin{lemma}[Upper Bound on the Inconsistent Components $I$]\label{lem_I_2}
	Suppose $\Pbb_{U}[\widetilde{M}] \le C$, then 
	$$\Pbb_S[\cup_{i\in I}S_i] \le \frac{2\kappa C}{\gamma}.$$
\end{lemma}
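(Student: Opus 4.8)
The plan is to exploit the optimality of $g$ in (\ref{algorithm_main}) against the ground-truth competitor $g^\ast$, and then to ``localize'' the resulting global inequality to the components in $I$, paying for the localization with the minority set $\widetilde{M}$. Write $w_i := \Pbb_S[S_i]$ and $b_i := \Pbb_{x\sim S_i}[g_{tc}(x)\neq y_i]$, so that $\Pbb_S[\cup_{i\in I}S_i]=\sum_{i\in I}w_i$ and, using that $g^\ast\equiv y_i$ on $S_i\subset U_i$, one has $L_{01}^S(g^\ast,g_{tc})=\sum_i w_i b_i$.

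First I would record the optimality step. Since $g^\ast\in G$ and $R_\Bcal(g^\ast)<\mu$, the function $g^\ast$ is feasible for (\ref{algorithm_main}), so $L_{01}^S(g,g_{tc})\le L_{01}^S(g^\ast,g_{tc})$, i.e. $\sum_i w_i\rbr{\Pbb_{x\sim S_i}[g\neq g_{tc}]-b_i}\le 0$. Splitting this sum over $I$ and its complement, and using that each $i\in I$ contributes more than $\frac{\gamma}{2}w_i$ by the very definition of $I$, gives
$$\frac{\gamma}{2}\sum_{i\in I}w_i \;\le\; \sum_{i\notin I} w_i\rbr{b_i-\Pbb_{x\sim S_i}[g\neq g_{tc}]}.$$

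The heart of the argument is a per-component inequality that controls the right-hand side by the minority set: for every $i$, $b_i-\Pbb_{x\sim S_i}[g\neq g_{tc}]\le \Pbb_{S_i}[\widetilde{M}_i]$. To see this I would split $\{g=g_{tc}\}$ on $S_i$ according to whether $g=y_i^{\text{Maj}}$: on the complement of $\widetilde{M}_i$ one has $g=y_i^{\text{Maj}}$, so agreement forces $g_{tc}=y_i^{\text{Maj}}$, whence $\Pbb_{S_i}[g=g_{tc}]\le\Pbb_{S_i}[g_{tc}=y_i^{\text{Maj}}]+\Pbb_{S_i}[\widetilde{M}_i]$. Crucially, the teacher margin (Assumption~\ref{asm_setting}(a)) gives $\Pbb_{S_i}[g_{tc}=y_i^{\text{Maj}}]\le\Pbb_{S_i}[g_{tc}=y_i]$ (with equality if $y_i^{\text{Maj}}=y_i$ and a $\gamma$ gap otherwise), so $\Pbb_{S_i}[g_{tc}\neq y_i^{\text{Maj}}]\ge b_i$, and rearranging yields the claim. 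Summing over $i\notin I$, all terms on the right being nonnegative, bounds the right-hand side by $\sum_i w_i\Pbb_{S_i}[\widetilde{M}_i]=\Pbb_S[\widetilde{M}]$.

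Finally I would convert the $S$-mass of $\widetilde{M}$ into $U$-mass using Assumption~\ref{asm_setting_2}(c), which gives $\Pbb_S[\widetilde{M}]\le\kappa\,\Pbb_U[\widetilde{M}]\le\kappa C$; combining with the display yields $\sum_{i\in I}w_i\le \frac{2\kappa C}{\gamma}$, as required. The main obstacle is precisely the per-component inequality: one cannot simply argue ``$i\in I\Rightarrow$ large minority set,'' since a classifier that is confidently \emph{wrong} on an entire component has an \emph{empty} minority set yet still lies in $I$. What rescues the bound is the optimality step, which excludes such a $g$, used \emph{together} with the teacher margin tying $y_i^{\text{Maj}}$ to $y_i$ on the residual components $i\notin I$ — these cannot be applied component-by-component. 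A secondary point to handle carefully is the $S\to U$ conversion, where I rely on the density-ratio reading of (c), consistent with $\kappa=2$ for $U=\half(S+T)$.
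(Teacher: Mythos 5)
Your proposal is correct and follows essentially the same route as the paper: the optimality of $g$ against the feasible competitor $g^\ast$, a split of the resulting inequality over $I$ and its complement (gaining $\gamma/2$ per component in $I$ by definition), and the per-component inequality $b_i-\Pbb_{x\sim S_i}[g\neq g_{tc}]\le \Pbb_{S_i}[\widetilde{M}_i]$, which is exactly the paper's Lemma~\ref{lem_smallresult} proved by the same combination of the teacher margin and a triangle-inequality/set-decomposition argument, followed by the $\kappa$-conversion from $S$-mass to $U$-mass.
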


Based on the above results, we are ready to bound the target error $\epsilon_T(g)$.

\begin{lemma}[Bounding the Target Error]\label{lem_target_err_2}
	Suppose $\Pbb_{U}[\widetilde{M}] \le C$. Let
	$$\epsilon_T^i(g) = \Pbb_T[T_i]\Pbb_{x\sim T}[g(x)\neq y_i]$$
	for $i$ in $\{1, \cdots, m\}$, so that $\epsilon_T(g) = \sum_{i=1}^{m}\epsilon_T^{i}(g)$. Then we can separately bound
	
	(a) $\sum_{i\in I}\epsilon_T^{i}(g)\le \frac{2\kappa rC}{\gamma}$
	
	(b) $\sum_{i\in \{1, \cdots, m\}\backslash I}\epsilon_T^i(g)\le \frac{2\kappa rC}{\gamma}$
	
	so that the combination gives
	$$\epsilon_T(g) \le \frac{4\kappa rC}{\gamma}.$$
\end{lemma}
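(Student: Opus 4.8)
The plan is to decompose the target error over components, $\epsilon_T(g)=\sum_{i=1}^m\epsilon_T^i(g)$ with $\epsilon_T^i(g)=\Pbb_T[T_i]\,\Pbb_{x\sim T_i}[g(x)\neq y_i]$, and to treat the inconsistent components $i\in I$ and the consistent components $i\notin I$ separately, exactly as the two parts of the statement request. Part (a) is essentially a repackaging of Lemma~\ref{lem_I_2}: for $i\in I$ I bound the conditional error trivially, $\Pbb_{x\sim T_i}[g(x)\neq y_i]\le 1$, so $\epsilon_T^i(g)\le\Pbb_T[T_i]$; Assumption~\ref{asm_setting_2}(b) then converts target mass to source mass, $\Pbb_T[T_i]\le r\,\Pbb_S[S_i]$, giving $\sum_{i\in I}\epsilon_T^i(g)\le r\,\Pbb_S[\cup_{i\in I}S_i]$, and substituting $\Pbb_S[\cup_{i\in I}S_i]\le 2\kappa C/\gamma$ from Lemma~\ref{lem_I_2} yields exactly $2\kappa rC/\gamma$.

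Part (b) is the crux and is where the fine-grained analysis of the minority set enters. For $i\notin I$ I split on whether the majority label is correct. If $y_i^{\text{Maj}}=y_i$, then on $T_i$ the event $\{g\neq y_i\}$ coincides with $\{g\neq y_i^{\text{Maj}}\}$, i.e.\ with $T_i\cap\widetilde{M}_i$, so $\epsilon_T^i(g)=\Pbb_T[T_i\cap\widetilde{M}_i]$, which Assumption~\ref{asm_setting_2}(c) dominates by $\kappa\,\Pbb_U[\widetilde{M}_i]$. If instead $y_i^{\text{Maj}}\neq y_i$, I claim $\Pbb_{S_i}[\widetilde{M}_i]\ge\gamma/2$: since $g\equiv y_i^{\text{Maj}}$ off $\widetilde{M}_i$ we have $\Pbb_{S_i}[g_{tc}=y_i^{\text{Maj}}]\ge 1-\Pbb_{S_i}[\widetilde{M}_i]-\Pbb_{S_i}[g\neq g_{tc}]$, while the margin bound of Assumption~\ref{asm_setting_2}(a) (applied with $k=y_i^{\text{Maj}}$) gives $\Pbb_{S_i}[g_{tc}=y_i^{\text{Maj}}]\le\Pbb_{S_i}[g_{tc}=y_i]-\gamma$; combining the two and using the defining inequality of $i\notin I$, namely $\Pbb_{S_i}[g\neq g_{tc}]\le\Pbb_{S_i}[g_{tc}\neq y_i]+\gamma/2$, the teacher-error terms cancel and leave $\Pbb_{S_i}[\widetilde{M}_i]\ge\gamma/2$. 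Hence $\Pbb_S[S_i]\le\frac{2}{\gamma}\Pbb_S[S_i\cap\widetilde{M}_i]$, and with Assumption~\ref{asm_setting_2}(b),(c) this gives $\epsilon_T^i(g)\le\Pbb_T[T_i]\le r\,\Pbb_S[S_i]\le\frac{2\kappa r}{\gamma}\Pbb_U[\widetilde{M}_i]$.

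Finally I combine the two cases. The components are disjoint, so the minority pieces $\widetilde{M}_i$ charged by the correct-majority indices and by the wrong-majority indices are disjoint subsets of $\widetilde{M}$. Bounding the correct-majority coefficient $\kappa$ by the larger wrong-majority coefficient $2\kappa r/\gamma$ (valid since $\gamma<1\le r$, as $\sum_i\Pbb_T[T_i]=\sum_i\Pbb_S[S_i]=1$ forces $r\ge 1$), both sums take the form $\frac{2\kappa r}{\gamma}\Pbb_U[\,\cdot\,]$ over disjoint sets, so they add to at most $\frac{2\kappa r}{\gamma}\Pbb_U[\widetilde{M}]\le 2\kappa rC/\gamma$. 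Summing (a) and (b) gives $\epsilon_T(g)\le 4\kappa rC/\gamma$.

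The main obstacle is the wrong-majority lower bound $\Pbb_{S_i}[\widetilde{M}_i]\ge\gamma/2$: it is the only place where the teacher margin $\gamma$, the consistency constraint defining $I$, and the constancy of $g$ off the minority set must be played against one another, and getting the constant right is delicate. A secondary subtlety is that, because $T_i$ is out of the source support, the correct-majority target error cannot be charged to any source quantity and must be passed to the covering measure $U$ through Assumption~\ref{asm_setting_2}(c); the disjointness of the components is exactly what lets the two differently-scaled charges share the single budget $\Pbb_U[\widetilde{M}]\le C$ without paying twice.
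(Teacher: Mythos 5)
Your proof is correct and takes essentially the same route as the paper's: part (a) by bounding $\epsilon_T^i(g)\le\Pbb_T[T_i]\le r\,\Pbb_S[S_i]$ and invoking Lemma~\ref{lem_I_2}, and part (b) by the case split on whether $y_i=y_i^{\text{Maj}}$, with the key lower bound $\Pbb_{S_i}[\widetilde{M}_i]\ge\gamma/2$ in the wrong-majority case obtained from the teacher margin, the triangle inequality, and the defining inequality of $i\notin I$, then charging everything to $\Pbb_U[\widetilde{M}]$ via Assumption~\ref{asm_setting_2}(c) and disjointness. The only cosmetic differences are your rearrangement of the margin/triangle-inequality chain and your explicit justification that the unified coefficient $2\kappa r/\gamma$ dominates $\kappa$ (via $r\ge 1$ and $\gamma\le 1$), which the paper simply asserts as ``$2\kappa r/\gamma\ge 2$ must hold.''
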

Specically, Lemma~\ref{lem_target_err_2}(a) is obtained by directly using Lemma~\ref{lem_I_2}, and Lemma~\ref{lem_target_err_2}(b) is proved by a fine-grained analysis on the minority set.

Finally, we can plug in $C$ from Lemma~\ref{lem_minority_2} and the desired results in Theorem~\ref{thm_multiplicative_2} and \ref{thm_constant_2} are obtained.

To make the proof complete, we provide a detailed proof of Lemma~\ref{lem_minority_2}, \ref{lem_I_2}, \ref{lem_target_err_2} in the following subsections.

\subsection{Proof of Lemma~\ref{lem_minority_2}.}\label{sec:proof_lem_1}
\begin{proof}
	We first prove the $(q, \mu)$-expansion case (a). The probability function $\Pbb$ are all w.r.t. the distribution $U$ in this lemma, so we omit this subscript. We also use $\Pbb_{i}$ for $\Pbb_{U_i}$ in this lemma.
	
	The robust minority set is $M_i = \cup_{k\in \{1, \cdots, K\}\backslash \{y_i^{\text{Maj}}\}}A_{ik}$. In order to do expansion, we partition $M_i$ into two halves:
	
	\begin{lemma}[Partition of $M_i$] \label{lem_partition_of_Mi}
		For each $i\in\{1, \cdots, m\}$, there exists a partition of the set $\{1, \cdots, K\}\backslash \{y_i^{\text{Maj}}\}$ into $J_{i1}$ and $J_{i2}$ such that the corresponding partition $M_i = M_{i}^1\cup M_{i}^2$ ($M_{i}^1 = \cup_{k\in J_{i1}}A_{ik}$, $M_{i}^2 = \cup_{k\in J_{i2}}A_{ik}$) satisfies $\Pbb_i[M_i^1] \le \half$ and $\Pbb_i[M_i^2] \le \half$.
	\end{lemma}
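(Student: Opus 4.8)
The plan is to reduce the statement to an elementary balancing fact about the masses of the pieces comprising $M_i$. Write $p_k := \Pbb_i[A_{ik}]$ for each $k$, set $p_0 := \Pbb_i[A_{i,y_i^{\text{Maj}}}]$ for the majority mass, and let $M := \Pbb_i[M_i] = \sum_{k\neq y_i^{\text{Maj}}} p_k$ be the total minority mass. The first and decisive step is to establish a uniform size bound on each individual minority piece. Since the sets $\{A_{ik}\}_k$ are disjoint subsets of $U_i$ that together partition $RS(g)\cap U_i$, their total conditional mass satisfies $p_0 + M = \sum_k p_k = \Pbb_i[RS(g)\cap U_i] \le \Pbb_i[U_i] = 1$. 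Combining this with the defining property of the majority label, namely $p_k \le p_0$ for every $k\neq y_i^{\text{Maj}}$, yields $p_k \le p_0 \le 1 - M$. In words, each minority piece has mass at most $1-M$, which is exactly the leverage the majority class provides.

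With this bound in hand, I would produce the partition by a greedy prefix argument. If $M \le \half$ there is nothing to do: take $J_{i1} = \{1,\dots,K\}\backslash\{y_i^{\text{Maj}}\}$ and $J_{i2} = \emptyset$, so both halves have mass at most $M \le \half$. Otherwise $M > \half$, and I add the pieces $A_{ik}$ to $M_i^1$ one at a time in an arbitrary order, stopping at the first piece $s$ whose inclusion would push $\Pbb_i[M_i^1]$ above $\half$; all remaining pieces (including the one of mass $s$) are assigned to $M_i^2$. By construction $\Pbb_i[M_i^1] \le \half$. For the complementary half, let $B_1 = \Pbb_i[M_i^1]$ denote the accumulated mass at the stopping point; the stopping condition gives $B_1 + s > \half$, hence $B_1 > \half - s$, and therefore $\Pbb_i[M_i^2] = M - B_1 < M - \half + s$. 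Plugging in the size bound $s \le 1-M$ then gives $\Pbb_i[M_i^2] < M - \half + (1-M) = \half$, as required. (The size bound also forces every piece to satisfy $p_k \le 1-M < \half$ whenever $M > \half$, so the first addition never overflows and the stopping point is well defined.)

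The only genuinely non-routine ingredient here is the size bound $p_k \le 1-M$; the remainder is a one-line balancing computation. This is precisely where the definition of the majority label and the normalization $\Pbb_i[U_i]=1$ enter, and it is what rules out the otherwise fatal configurations — for instance minority pieces of masses $0.4,0.4,0.2$, for which no two-partition into halves of mass at most $\half$ exists. Thus the main thing to get right is to record that the majority piece lies outside $M_i$ and absorbs enough mass to shrink every minority piece below $1-M$; once that is in place, the partition follows mechanically from the greedy construction.
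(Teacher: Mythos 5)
Your proof is correct and follows essentially the same greedy balancing strategy as the paper: both arguments hinge on the fact that each minority piece $A_{ik}$ has mass at most that of the majority piece $A_{i,y_i^{\text{Maj}}}$, which lies outside $M_i$ and therefore absorbs enough mass for a two-way split to exist. The only difference is organizational—the paper adds elements one at a time and shows the two candidate bin-masses always sum to at most $1$ (by replacing $k_0$ with $y_i^{\text{Maj}}$ in one bin), so some bin can always accept the next element, whereas you first extract the uniform size bound $p_k \le 1-M$ and then make a single prefix cut; both routes are sound.
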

	
	\begin{proof}
		Starting from $J_{i1} = J_{i2} = \emptyset$, and each time we add an element $k_0\in\{1, \cdots, K\}\backslash \{y_i^{\text{Maj}}\}$ into $J_{i1}$ or $J_{i2}$ while keeping the properties $\Pbb_i[M_{i}^{1}] \le \half$ and $\Pbb_i[M_{i}^2] \le \half$ hold. We prove that for any $k_0\in\{1, \cdots, K\}\backslash (\{y_i^{\text{Maj}}\} \cup J_{i1}\cup J_{i2})$, either $\Pbb_i[\cup_{k\in J_{i1} \cup \{k_0\}}A_{ik}] \le \frac{1}{2}$ or $\Pbb_i[\cup_{k\in J_{i2} \cup \{k_0\}}A_{ik}] \le \frac{1}{2}$ holds, so we can repeat the process until $J_{i1}$ and $J_{i2}$ is a partition of $\{1, \cdots, K\}\backslash \{y_i^{\text{Maj}}\}$. In fact, since 
		\begin{align*}
		& \;\;\;\;\;  \Pbb_i[\cup_{k\in J_{i1} \cup \{k_0\}}A_{ik}] + \Pbb_i[\cup_{k\in J_{i2} \cup \{k_0\}}A_{ik}] \\
		& \le \Pbb_i[\cup_{k\in J_{i1} \cup \{k_0\}}A_{ik}] + \Pbb_i[\cup_{k\in J_{i2} \cup \{y_i^{\text{Maj}}\}}A_{ik}]\\
		& \;\;\;\;\; \text{(By the definition of $y_i^{\text{Maj}}$)}\\
		& \le \Pbb_i[\cup_{k\in\{1, \cdots, K\}}A_{ik}]\\
		& \le 1,
		\end{align*}
		we know that either $\Pbb_i[\cup_{k\in J_{i1} \cup \{k_0\}}A_{ik}]$ or $\Pbb_i[\cup_{k\in J_{i2} \cup \{k_0\}}A_{ik}]$ is no more than $\half$, and Lemma~\ref{lem_partition_of_Mi} is proved.
		
	\end{proof}
	
	Let $M^1 = \cup_{i=1}^{m}M_{i}^1$, $M^2 = \cup_{i=1}^{m}M_{i}^2$, so that $M^1$ and $M^2$ form a partition of $M$. Based on Lemma~\ref{lem_partition_of_Mi}, we know that either $\Pbb[M^1] < q$, or $M^1$ satisfies the requirement for $(q, \mu)$-constant expansion. Hence,
	\begin{align}\label{equ_const_expansion_minority}
	& \Pbb[\Ncal(M^1)] \ge \Pbb[M^1] + \min\rbr{\mu, \Pbb[M^1]} \text{ or } \Pbb[M^1] < q 
	\end{align}
	
	On the other hand, we claim that $\Ncal(M^1)\backslash M^1$ contains only non-robust points. Otherwise, suppose there exists a robust point $x \in \Ncal(M^1)\backslash M^1$, say $x \in \Ncal(A_{ik})$ for some $i\in \{1, \cdots, m\}$ and $k\in J_{i1}$. By the definition of neighborhood, there exists $x' \in A_{ik}$ such that there exists $x'' \in \Bcal(x) \cap \Bcal(x')$. Therefore, by the definition of robustness, $g(x) = g(x'') = g(x') = k$. Also by the definition of neighborhood, we know that $x \in U_i$, so it must be that $x\in A_{ik}$ since $x$ is robust. This contradicts with $x\notin M^1$! Therefore, $\Ncal(M^1)\backslash M^1$ is a subset of the set of all non-robust points.
	Since the total measure of non-robust points is $R_\Bcal(g)$ by definition, we know that 
	\begin{align}\label{equ_2}
	\Pbb[\Ncal(M^{1})] - \Pbb[M^1] \le \Pbb[\Ncal(M^{1})\backslash M^{1}] \le R_\Bcal(G) < \mu.
	\end{align}
	
	Combining (\ref{equ_const_expansion_minority}) and (\ref{equ_2}), we know that under $\Pbb[M^1] \ge q$, it must hold that $\Pbb[M^1] < \mu$, or else (\ref{equ_const_expansion_minority}) and (\ref{equ_2}) would be a contradiction. In all, this means that $\Pbb[M^1] \le \max(q, \mu)$ in any case. 
	
	Similarly, we know $\Pbb[M^2] \le \max(q, \mu)$ also hold. Therefore, $\Pbb[M] \le 2\max(q, \mu)$.
	
	Since $\widetilde{M} \backslash M$ only consists of non-robust points, we know that
	$$\Pbb[\widetilde{M}] \le \Pbb[M] + R_B(g) \le 2\max\rbr{q, \mu} + \mu,$$
	which is the desired result (a).
	
	For the $(\half, c)$-multiplicative expansion case (b), it is easy to verify that $(\half, c)$-multiplicative expansion must imply $(\frac{\mu}{c-1}, \mu)$-constant expansion (See Lemma B.6 in \citet{wei2021theoretical}). Therefore, the result is obtained by plugging in $q = \frac{\mu}{c-1}$. 
\end{proof}

\subsection{Proof of Lemma~\ref{lem_I_2}}\label{sec:proof_lem_2}
\begin{proof}
	We first prove the following lemma.
	
	\begin{lemma}\label{lem_smallresult}
		For any $i\in\{1, \cdots, m\}$, we have
		\begin{align}\label{equ_smallresult}
		\Pbb_{x\sim S_i}[g(x) \neq g_{tc}(x)] + \Pbb_{S_i}[\widetilde{M_i}] \ge \Pbb_{x\sim S_i}[g_{tc}(x)\neq y_i].
		\end{align}
	\end{lemma}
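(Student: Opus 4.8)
The plan is to prove \eq{equ_smallresult} by rewriting it in terms of \emph{agreement} probabilities and then charging the event that $g$ equals the majority label to the two quantities on the left. Since $S_i \subset U_i$ and $\widetilde{M_i} = U_i \cap \{x : g(x) \neq y_i^{\text{Maj}}\}$, we have $\Pbb_{S_i}[\widetilde{M_i}] = \Pbb_{x\sim S_i}[g(x)\neq y_i^{\text{Maj}}]$. Taking complements of the two $0$--$1$ events that appear negated, the claim is equivalent to
\begin{align*}
\Pbb_{x\sim S_i}[g(x) \neq g_{tc}(x)] + \Pbb_{x\sim S_i}[g_{tc}(x) = y_i] \ge \Pbb_{x\sim S_i}[g(x) = y_i^{\text{Maj}}],
\end{align*}
so it suffices to upper bound the probability that $g$ predicts the majority label $y_i^{\text{Maj}}$ on $S_i$.

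First I would split the event $\{g = y_i^{\text{Maj}}\}$ according to whether $g_{tc}$ also equals $y_i^{\text{Maj}}$. On the part where $g_{tc}(x) \neq y_i^{\text{Maj}}$ we have $g(x) = y_i^{\text{Maj}} \neq g_{tc}(x)$, so this part lies inside $\{g \neq g_{tc}\}$ and contributes at most $\Pbb_{x\sim S_i}[g(x)\neq g_{tc}(x)]$; on the complementary part I bound crudely by $\Pbb_{x\sim S_i}[g_{tc}(x) = y_i^{\text{Maj}}]$. Adding the two pieces gives
\begin{align*}
\Pbb_{x\sim S_i}[g(x) = y_i^{\text{Maj}}] \le \Pbb_{x\sim S_i}[g(x)\neq g_{tc}(x)] + \Pbb_{x\sim S_i}[g_{tc}(x) = y_i^{\text{Maj}}].
\end{align*}

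The remaining and decisive step is to replace $\Pbb_{x\sim S_i}[g_{tc}(x) = y_i^{\text{Maj}}]$ by $\Pbb_{x\sim S_i}[g_{tc}(x) = y_i]$. If $y_i^{\text{Maj}} = y_i$ this is an equality and nothing is needed. Otherwise $y_i^{\text{Maj}}$ is a label distinct from $y_i$, and the teacher margin assumption (Assumption~\ref{asm_setting}(a)) applied with $k = y_i^{\text{Maj}}$ gives $\Pbb_{x\sim S_i}[g_{tc}(x) = y_i] \ge \Pbb_{x\sim S_i}[g_{tc}(x) = y_i^{\text{Maj}}] + \gamma \ge \Pbb_{x\sim S_i}[g_{tc}(x)=y_i^{\text{Maj}}]$. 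Substituting this into the previous display yields exactly the equivalent inequality, and hence \eq{equ_smallresult}.

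I expect the main obstacle to be recognizing that a naive triangle inequality $\mathbf{1}[g_{tc}\neq y_i] \le \mathbf{1}[g_{tc}\neq g] + \mathbf{1}[g \neq y_i]$ is \emph{insufficient}: it produces $\Pbb_{x\sim S_i}[g(x)\neq y_i]$ rather than $\Pbb_{x\sim S_i}[g(x) \neq y_i^{\text{Maj}}]$, and these two differ precisely when the classifier's majority label $y_i^{\text{Maj}}$ disagrees with the ground truth $y_i$. In fact the pointwise inequality is \emph{false} at points where $g(x) = g_{tc}(x) = y_i^{\text{Maj}} \neq y_i$, so the statement only holds after integrating; it is exactly the teacher margin that caps the mass of such points, since they force $g_{tc} \neq y_i$. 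Thus the essential insight is that \eq{equ_smallresult} is not a pointwise identity but an averaged one, with Assumption~\ref{asm_setting}(a) supplying the only nontrivial ingredient.
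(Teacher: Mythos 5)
Your proof is correct and is essentially the paper's own argument: both rest on the same two ingredients, namely the margin assumption giving $\Pbb_{x\sim S_i}[g_{tc}(x)=y_i]\ge \Pbb_{x\sim S_i}[g_{tc}(x)=y_i^{\text{Maj}}]$ and the triangle inequality through the intermediate label $y_i^{\text{Maj}}$ (your event decomposition of $\{g=y_i^{\text{Maj}}\}$ is just the complement form of the pointwise bound $\mathbf{1}[g_{tc}(x)\neq y_i^{\text{Maj}}]\le \mathbf{1}[g_{tc}(x)\neq g(x)]+\mathbf{1}[g(x)\neq y_i^{\text{Maj}}]$ that the paper integrates). Your closing observation that the inequality with $y_i$ in place of $y_i^{\text{Maj}}$ fails pointwise and only holds after averaging via the margin is accurate and consistent with the paper's proof structure.
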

	
	\begin{proof}
		Based on the margin assumption (Assumption~\ref{asm_setting_2}(a)), we know that $\Pbb_{x\sim S_i}[g_{tc}(x) = y_i] \ge \Pbb_{x\sim S_i}[g_{tc}(x) = y_i^{\text{Maj}}]$. Therefore, $\Pbb_{x\sim S_i}[g_{tc}(x) \neq y_i] \le \Pbb_{x\sim S_i}[g_{tc}(x) \neq y_i^{\text{Maj}}]$. Along with triangle inequality we know that
		$$\begin{aligned}
		& \;\;\;\;\, \Pbb_{x\sim S_i}[g(x) \neq g_{tc}(x)] + \Pbb_{S_i}[\widetilde{M_i}]\\
		& = \Pbb_{x\sim S_i}[g(x) \neq g_{tc}(x)] + \Pbb_{x\sim S_i}[g(x) \neq y_i^{\text{Maj}}] \\
		& \ge \Pbb_{x\sim S_i}[g_{tc}(x) \neq y_i^{\text{Maj}}] \\
		& \ge \Pbb_{x\sim S_i}[g_{tc}(x) \neq y_i],
		\end{aligned}$$
		which proves the result.
	\end{proof}
	
	Based on Lemma~\ref{lem_smallresult}, we can write:
	\begin{align}\nonumber
	& \;\;\;\;\;L_{01}^S(g, g_{tc}) \\\nonumber
	& = 
	\sum_{i\in I}\Pbb_S[S_i]\Pbb_{x\sim S_i}[g(x) \neq g_{tc}(x)] \\\nonumber
	& \;\;\;\; + \sum_{i\in \{1, \cdots, m\}\backslash I}\Pbb_S[S_i]\Pbb_{x\sim S_i}[g(x) \neq g_{tc}(x)] \\\nonumber
	& \ge \sum_{i\in I}\Pbb_S[S_i]\rbr{\Pbb_{x\sim S_i}[g(x) \neq y_i] + \frac{\gamma}{2}} \\\nonumber
	& \;\;\;\; + \sum_{i\in \{1, \cdots, m\}\backslash I}\Pbb_S[S_i]\sbr{\Pbb_{x\sim S_i}[g_{tc}(x)\neq y_i] - \Pbb_{S_i}[\widetilde{M_i}]}\\\nonumber
	& \;\;\;\;\text{(by definition of $I$ for the first term, }\\\nonumber
	& \;\;\;\;\;\; \text{by Lemma~\ref{lem_smallresult} for the second term)}\\\nonumber
	& = L_{01}^S(g^\ast, g_{tc}) + \frac{\gamma}{2}\sum_{i\in I}\Pbb_S[S_i] \\\label{equ:proof_lem_2}
	& \;\;\;\; - \sum_{i\in \{1, \cdots, m\}\backslash I}\Pbb_S[\widetilde{M}_i].
	\end{align}
	
	Since by definition of our algorithm, $L_{01}^S(g, g_{tc}) \le L_{01}^S(g^\ast, g_{tc})$, we finally know that 
	$$\begin{aligned}
	\sum_{i\in I}\Pbb_S[S_i] & \le \frac{2}{\gamma}\sum_{i\in \{1, \cdots, m\}\backslash I}\Pbb_S[\widetilde{M}_i]\\
	& \le \frac{2}{\gamma}\Pbb_S[\widetilde{M}]\\
	& \le \frac{2\kappa}{\gamma}\Pbb_{U}[\widetilde{M}] \;\;\;\;\text{(by Assumption~\ref{asm_setting_2}(c))}\\
	& \le \frac{2\kappa C}{\gamma},
	\end{aligned}$$
	which is the desired result.
\end{proof}

\subsection{Proof of Lemma~\ref{lem_target_err_2}.} \label{sec:proof_lem_3}
\begin{proof}
	(a) This is a direct result from Lemma~\ref{lem_I_2} since
	\begin{align}\nonumber
	\sum_{i\in I}\epsilon_T^i(g) 
	& \le \sum_{i\in I} \Pbb_T[T_i] \\\nonumber
	& \le \sum_{i\in I} r\Pbb_S[S_i] \\\label{equ:proof_lem_3}
	& \le \frac{2\kappa rC}{\gamma}.
	\end{align}
	
	(b) For $i\in\{1, \cdots, m\}\backslash I$, we proceed by considering the following two cases: $y_i = y_i^{\text{Maj}}$ or $y_i \neq y_i^{\text{Maj}}$.
	
	If $y_i = y_i^{\text{Maj}}$, we have
	$$\epsilon_T^i(g) = \Pbb_T[\widetilde{M}_i] \le \kappa\Pbb_{U}[\widetilde{M}_i].$$
	
	If $y_i \neq y_i^{\text{Maj}}$, we have
	$$\begin{aligned}
	\frac{\Pbb_S[\widetilde{M}_i]}{\Pbb_S[S_i]} 
	& = \Pbb_{S_i}[\widetilde{M}_i]\\
	& = \Pbb_{x\sim S_i}[g(x)\neq y_i^{\text{Maj}}]\\
	& \ge \Pbb_{x\sim S_i}[g_{tc}(x)\neq y_i^{\text{Maj}}] - \Pbb_{x\sim S_i}[g_{tc}(x)\neq g(x)]\\
	& \text{(triangle inequality)}\\
	& = 1 - \Pbb_{x\sim S_i}[g_{tc}(x)= y_i^{\text{Maj}}]\\
	&\;\;\;\; - \Pbb_{x\sim S_i}[g_{tc}(x)\neq g(x)]\\
	& \ge 1 - (\Pbb_{x\sim S_i}[g_{tc}(x)= y_i] - \gamma) \\
	&\;\;\;\; - (\Pbb_{x\sim S_i}[g_{tc}(x) \neq y_i] + \frac{\gamma}{2})\\
	& \text{(Assumption~\ref{asm_setting_2}(a) and Definition of $I$)}\\
	& = \frac{\gamma}{2}.
	\end{aligned}$$
	
	Then we have
	$$\begin{aligned}
	\epsilon_T^i(g) & \le \Pbb_T[T_i] \\
	& \le r\Pbb_S[S_i] \\
	& \le \frac{2r}{\gamma}\Pbb_S[\widetilde{M}_i] \\
	& \le \frac{2\kappa r}{\gamma}\Pbb_{U}[\widetilde{M}_i]
	\end{aligned}$$
	
	Summarizing the two cases above, since $\frac{2 \kappa r}{\gamma} \ge 2$ must hold, we always have
	
	$$\epsilon_T^i(g) \le \frac{2\kappa r}{\gamma}\Pbb_{U}[\widetilde{M}_i],$$
	and as a result,
	
	$$\begin{aligned}
	\sum_{i\in\{1, \cdots, m\}\backslash I}\epsilon_T^i(g) & \le \sum_{i\in\{1, \cdots, m\}\backslash I}\frac{2\kappa r}{\gamma}\Pbb_{U}[\widetilde{M}_i] \\
	& \le \frac{2\kappa r}{\gamma}\Pbb_{U}[\widetilde{M}] \\
	& = \frac{2\kappa rC}{\gamma}.
	\end{aligned}$$
	
\end{proof}

\section{Proof of Results in Section~\ref{sec:finite}}
\label{sec:proof_finite}

As a side note, we first state the definition of the all-layer margin from \citet{wei2019improved}. For the neural network $f(x) = W_p\phi(\cdots \phi(W_1x)\cdots)$, we write $f$ as $f(x) = f_{2p-1}\circ \cdots \circ f_1(x)$, where the $f_i$'s alternate between matrix multiplications and applications of the activation function $\phi$. Let $\delta_1, \cdots, \delta_{2p-1}$ denote perturbations intended to be applied at each layer $i = 1, \cdots, 2p-1$, and the perturbed network output $f(x, \delta_1, \cdots, \delta_{2p-1})$ is recursively defined as
\begin{align*}
h_1(x, \delta) & = f_1(x) + \delta_1\|x\|_2,\\
h_i(x, \delta) & = f_i(h_{i-1}(x, \delta)) + \delta_i\|h_{i-1}(x, \delta)\|_2,\\
f(x, \delta) & = h_{2p-1}(x, \delta).
\end{align*}
And the all-layer margin is defined as the minimum norm of $\delta$ required to make the classifier misclassify the input, i.e.
\begin{align*}
m(f, x, y) &:= \min_{\delta_1, \cdots, \delta_{2p-1}}\sqrt{\sum_{i=1}^{2p-1}\|\delta_i\|_2^2}\\
& \text{subject to } \argmax_{y'} f(x, \delta_1, \cdots, \delta_{2p-1})_{y'} \neq y.
\end{align*}
The related results about all-layer margin (Proposition~\ref{proposition1} and \ref{proposition2}), though, come directly from \citet{wei2021theoretical}.

\subsection{Proof of Theorem~\ref{thm:finite_main}}
We first state a stronger version of Lemma \ref{lem_I} and \ref{lem_target_err} in the following lemma (a) and (b).

\begin{lemma}[Stronger Version of Lemma~\ref{lem_I} and \ref{lem_target_err}]\label{lem_I_target_stronger}
	We assume that $L_{01}^S(g, g_{tc}) \le L_{01}^S(g^\ast, g_{tc}) + 2\Delta$. (In the previous proofs of Section~\ref{sec:main_theorem}, $\Delta = 0$.) Similarly, suppose $\Pbb_{\half(S+T)}[\widetilde{M}] \le C$, then we have the following results:
	
	(a) The ``inconsistency set'' $I$ is upper-bounded by
	$$\Pbb_S[\cup_{i\in I}S_i] \le \frac{4(C + \Delta)}{\gamma}.$$
	
	(b) The final target error is upper-bounded by
	$$\epsilon_T(g) \le \frac{8r(C+\Delta)}{\gamma}.$$
\end{lemma}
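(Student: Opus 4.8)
The plan is to re-run the proofs of Lemma~\ref{lem_I} and Lemma~\ref{lem_target_err} essentially verbatim, changing only the single step where the optimality of $g$ for (\ref{algorithm_main}) was used, so as to carry the extra additive slack $2\Delta$. Recall that the proof of Lemma~\ref{lem_I} establishes the identity
$$L_{01}^S(g, g_{tc}) \ge L_{01}^S(g^\ast, g_{tc}) + \frac{\gamma}{2}\sum_{i\in I}\Pbb_S[S_i] - \sum_{i\in \{1, \cdots, m\}\backslash I}\Pbb_S[\widetilde{M}_i],$$
which uses only the definition of $I$ on the inconsistent components together with Lemma~\ref{lem_smallresult} on the remaining ones; this derivation never invokes the algorithm. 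The algorithm entered only afterwards, through the comparison $L_{01}^S(g, g_{tc}) \le L_{01}^S(g^\ast, g_{tc})$, which I now replace by the hypothesis $L_{01}^S(g, g_{tc}) \le L_{01}^S(g^\ast, g_{tc}) + 2\Delta$.

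For part (a) I would simply feed this weakened comparison into the identity above and cancel $L_{01}^S(g^\ast, g_{tc})$, obtaining $\frac{\gamma}{2}\sum_{i\in I}\Pbb_S[S_i] \le 2\Delta + \sum_{i\notin I}\Pbb_S[\widetilde{M}_i] \le 2\Delta + \Pbb_S[\widetilde{M}]$. Since $\Pbb_{\half(S+T)}[A] \ge \frac{1}{2}\Pbb_S[A]$ for every set $A$, we have $\Pbb_S[\widetilde{M}] \le 2\Pbb_{\half(S+T)}[\widetilde{M}] \le 2C$, and hence $\sum_{i\in I}\Pbb_S[S_i] \le \frac{4(C+\Delta)}{\gamma}$, exactly the claimed bound.

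For part (b) the crucial observation is that the complement estimate in Lemma~\ref{lem_target_err}(b) does not use optimality of $g$ at all: its case split ($y_i = y_i^{\text{Maj}}$ versus $y_i \neq y_i^{\text{Maj}}$) rests only on the margin Assumption~\ref{asm_setting}(a), the definition of $I$, the ratio bound $r$, and $\Pbb_{\half(S+T)}[\widetilde{M}] \le C$. Hence that half is untouched and still gives $\sum_{i\notin I}\epsilon_T^i(g) \le \frac{4rC}{\gamma}$. The half over $I$ is controlled as in Lemma~\ref{lem_target_err}(a) via $\epsilon_T^i(g) \le \Pbb_T[T_i] \le r\Pbb_S[S_i]$, but now summed against the enlarged estimate from part (a), giving $\sum_{i\in I}\epsilon_T^i(g) \le \frac{4r(C+\Delta)}{\gamma}$. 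Adding the two halves gives $\epsilon_T(g) \le \frac{4rC}{\gamma} + \frac{4r(C+\Delta)}{\gamma} = \frac{8rC + 4r\Delta}{\gamma}$, and because $\Delta \ge 0$ this is at most $\frac{8r(C+\Delta)}{\gamma}$.

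The individual computations are routine; the only point requiring care is the bookkeeping of where the slack propagates. The main thing to get right is the structural observation that the slack enters the target bound \emph{solely} through the $I$-component (via part (a)), while the complement component is inherited unchanged from Lemma~\ref{lem_target_err}(b), followed by the mild check that the resulting $\frac{8rC+4r\Delta}{\gamma}$ is (loosely) absorbed into the advertised $\frac{8r(C+\Delta)}{\gamma}$ using $\Delta \ge 0$.
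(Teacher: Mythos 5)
Your proposal is correct and follows essentially the same route as the paper: the paper likewise re-runs the chain of inequalities from Lemma~\ref{lem_I_2} with the optimality comparison relaxed to $L_{01}^S(g,g_{tc}) \le L_{01}^S(g^\ast,g_{tc}) + 2\Delta$ and $\Pbb_S[\widetilde{M}] \le 2C$, yielding part (a), and then propagates the slack only through the $\sum_{i\in I}\epsilon_T^i(g) \le r\sum_{i\in I}\Pbb_S[S_i]$ term while reusing Lemma~\ref{lem_target_err_2}(b) unchanged for the complement. Your bookkeeping, including the observation that $\frac{8rC+4r\Delta}{\gamma}$ is absorbed into the stated $\frac{8r(C+\Delta)}{\gamma}$, matches the paper's argument.
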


So we only need to find $C$ and $\Delta$. $C$ can be found by Lemma~\ref{lem_minority} by using a suitable $\hat{\mu}$ where $R_\Bcal(g) \le \hat{\mu}$. These results are given by the following lemma.

\begin{lemma}[Finite Sample Bound]\label{lem:finite_sample_bound}
	We have
	$$L_{01}^S(g, g_{tc}) \le L_{01}^S(g^\ast, g_{tc}) + 2\Delta$$
	and
	$$R_\Bcal(g) \le \hat{\mu}$$
	for
	$$\begin{aligned}
	\Delta  = \widetilde{O}& \left(\left(\Pbb_{x\sim\hat{S}}[m(f^\ast, x, g_{tc}(x))\le t] - L_{01}^{\hat{S}}(g^\ast, g_{tc})\right)\right. \\
	& + \left. \frac{\sum_i\sqrt{q}\|W_i\|_F}{t\sqrt{n}} + \sqrt{\frac{\log(1/\delta)+p\log n}{n}} \right),\\
	\hat{\mu} = \mu& + \widetilde{O}\rbr{\frac{\sum_i\sqrt{q}\|W_i\|_F}{t\sqrt{n}} + \sqrt{\frac{\log(1/\delta)+p\log n}{n}}}.\end{aligned}$$
\end{lemma}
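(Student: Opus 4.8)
The plan is to prove the two displayed inequalities separately, in each case combining the cited generalization bound (Proposition~\ref{proposition1} or Proposition~\ref{proposition2}) with the fact that $g$ is the minimizer of the margin program (\ref{algorithm_margin}). Throughout I write $B := \widetilde{O}\rbr{\frac{\sum_i\sqrt{q}\|W_i\|_F}{t\sqrt{n}} + \sqrt{\frac{\log(1/\delta)+p\log n}{n}}}$ for the common generalization term, so that $\hat\mu = \mu + B$ and $\Delta = \widetilde O\big((E - L_{01}^{\hat S}(g^\ast,g_{tc})) + B\big)$ with $E := \Pbb_{x\sim\hat S}[m(f^\ast,x,g_{tc}(x))\le t]$.

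The bound $R_\Bcal(g)\le\hat\mu$ is essentially immediate. Applying Proposition~\ref{proposition2} to the returned network $f$ gives, with probability $1-\delta$, $R_\Bcal(g) \le \Pbb_{x\sim\half(\hat S+\hat T)}[m_\Bcal(f,x)\le t] + B$; the empirical robust-margin quantity on the right is at most $\mu$ because it is precisely the constraint enforced in (\ref{algorithm_margin}). Folding $B$ into the definition of $\hat\mu$ yields the claim.

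For the first inequality I would begin with Proposition~\ref{proposition1}, which gives $L_{01}^S(g,g_{tc}) \le \Pbb_{x\sim\hat S}[m(f,x,g_{tc}(x))\le t] + B$. Next I would invoke optimality: the hypothesis $\mu \ge \Pbb_{x\sim\half(\hat S+\hat T)}[m_\Bcal(f^\ast,x)\le t]$ makes $f^\ast$ feasible for (\ref{algorithm_margin}), and since $g$ minimizes the empirical margin objective over all feasible networks, $\Pbb_{x\sim\hat S}[m(f,x,g_{tc}(x))\le t] \le \Pbb_{x\sim\hat S}[m(f^\ast,x,g_{tc}(x))\le t] = E$, hence $L_{01}^S(g,g_{tc}) \le E + B$. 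It then remains to pass from the empirical surrogate $E$ at $f^\ast$ to the population loss $L_{01}^S(g^\ast,g_{tc})$. Writing $E = \big(E - L_{01}^{\hat S}(g^\ast,g_{tc})\big) + L_{01}^{\hat S}(g^\ast,g_{tc})$, the first summand is exactly the leading term of $\Delta$ and is nonnegative, since the all-layer margin vanishes on misclassified points and therefore $\mathbf 1[g^\ast(x)\neq g_{tc}(x)] \le \mathbf 1[m(f^\ast,x,g_{tc}(x))\le t]$. For the second summand, $g^\ast$ is a single \emph{fixed} classifier, so a direct Hoeffding bound over the $n$ i.i.d.\ source samples gives $L_{01}^{\hat S}(g^\ast,g_{tc}) \le L_{01}^S(g^\ast,g_{tc}) + \widetilde O(\sqrt{\log(1/\delta)/n})$, and this extra term is already dominated by $B$. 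Substituting, $L_{01}^S(g,g_{tc}) \le L_{01}^S(g^\ast,g_{tc}) + \big(E - L_{01}^{\hat S}(g^\ast,g_{tc})\big) + 2B \le L_{01}^S(g^\ast,g_{tc}) + 2\Delta$, where the last step uses nonnegativity of the gap term to write $\big(E - L_{01}^{\hat S}(g^\ast,g_{tc})\big) + 2B \le 2\big[\big(E - L_{01}^{\hat S}(g^\ast,g_{tc})\big) + B\big] = 2\Delta$.

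I expect the genuinely routine parts to be the two invocations of the cited propositions and the concentration of the fixed function $g^\ast$; the only point needing care is the final bookkeeping that turns the naive bound $E+B$ into $L_{01}^S(g^\ast,g_{tc}) + 2\Delta$. The subtlety is that $\Delta$ is defined through the \emph{empirical} loss $L_{01}^{\hat S}(g^\ast,g_{tc})$ whereas the target is the \emph{population} loss $L_{01}^S(g^\ast,g_{tc})$, so exactly one additional concentration term (safely absorbed into $B$) must be accounted for, and the factor of $2$ in $2\Delta$ is precisely what absorbs this extra copy of $B$. To close, I would union-bound the three high-probability events — the two propositions and the Hoeffding estimate for $g^\ast$ — and rescale $\delta$ by a constant factor that disappears inside the $\widetilde O$ notation, so that the whole statement holds with probability $1-\delta$.
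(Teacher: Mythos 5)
Your proposal is correct and follows essentially the same route as the paper's proof: Proposition~\ref{proposition2} plus the constraint of (\ref{algorithm_margin}) for the bound on $R_\Bcal(g)$, and Proposition~\ref{proposition1}, optimality of $g$ against the feasible $f^\ast$, and a concentration bound for the fixed $g^\ast$ to pass from $L_{01}^{\hat S}(g^\ast,g_{tc})$ to $L_{01}^S(g^\ast,g_{tc})$ for the first inequality. Your explicit bookkeeping of how the factor of $2$ in $2\Delta$ absorbs the extra concentration term, using nonnegativity of the margin-surrogate gap, is in fact spelled out more carefully than in the paper, which simply writes the final chain as ``$=2\Delta$''.
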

And by plugging in the results from Lemma~\ref{lem_I_target_stronger} and \ref{lem:finite_sample_bound}, along with the constant $C$ in Lemma~\ref{lem_minority}, we immediately get the result in Theorem~\ref{thm:finite_main}. The proof of Lemma~\ref{lem_minority} can be founded in the proof of Lemma~\ref{lem_minority_2} in Appendix~\ref{sec:full_proof}, so we only need to prove Lemma~\ref{lem_I_target_stronger} and \ref{lem:finite_sample_bound} below.

\paragraph{Proof of Lemma~\ref{lem_I_target_stronger}.}

\begin{proof}
	
	(a). We only need to modify the proof of Lemma~\ref{lem_I_2} (Appendix~\ref{sec:proof_lem_2}), equation (\ref{equ:proof_lem_2}), in the following way (where $\kappa = 2$):
	
	$$\begin{aligned}
	&\;\;\;\;\;L_{01}^S(g, g_{tc}) \\\nonumber
	& = \sum_{i\in I}\Pbb_S[S_i]\Pbb_{x\sim S_i}[g(x) \neq g_{tc}(x)] \\
	& \;\;\;\; + \sum_{i\in \{1, \cdots, m\}\backslash I}\Pbb_S[S_i]\Pbb_{x\sim S_i}[g(x) \neq g_{tc}(x)] \\
	& \ge \sum_{i\in I}\Pbb_S[S_i]\rbr{\Pbb_{x\sim S_i}[g(x) \neq y_i] + \frac{\gamma}{2}} \\
	& \;\;\;\; + \sum_{i\in \{1, \cdots, m\}\backslash I}\Pbb_S[S_i]\sbr{\Pbb_{x\sim S_i}[g_{tc}(x)\neq y_i] - \Pbb_{S_i}[\widetilde{M_i}]}\\
	& \;\;\;\;\text{(by definition of $I$ for the first term, }\\
	& \;\;\;\;\;\; \text{by Lemma~\ref{lem_smallresult} for the second term)}\\
	& = L_{01}(g^\ast, g_{tc}) + \frac{\gamma}{2}\sum_{i\in I}\Pbb_S[S_i] \\
	& \;\;\;\; - \sum_{i\in \{1, \cdots, m\}\backslash I}\Pbb_S[\widetilde{M}_i]\\
	& \ge L_{01}^S(g, g_{tc}) - 2\Delta + \frac{\gamma}{2}\sum_{i\in I}\Pbb_S[S_i] - 2C,
	\end{aligned}$$
	
	and we immediately obtain 
	$$\sum_{i\in I}\Pbb_S[S_i] \le \frac{4(C+\Delta)}{\gamma}.$$
	
	(b). Similarly, we only need to modify the proof of Lemma~\ref{lem_target_err_2} (a) (Appendix~\ref{sec:proof_lem_3}), equation (\ref{equ:proof_lem_3}) based on the previous Lemma~\ref{lem_I_target_stronger} in the following way (where $\kappa = 2$):
	
	$$\begin{aligned}
	\sum_{i\in I}\epsilon_T^i(g) 
	& \le \sum_{i\in I} \Pbb_T[T_i] \\
	& \le \sum_{i\in I} r\Pbb_S[S_i] \\
	& \le \frac{4r(C+\Delta)}{\gamma}.
	\end{aligned}$$
	
	And since Lemma~\ref{lem_target_err_2} (b) 
	$$\sum_{i\in \{1, \cdots, K\}\backslash I}\epsilon_T^i(g) \le \frac{4rC}{\gamma}$$
	holds without change, together we easily have 
	$$\epsilon_T(g) \le \frac{8r(C+\Delta)}{\gamma}.$$
	
\end{proof}

\paragraph{Proof of Lemma~\ref{lem:finite_sample_bound}.}

\begin{proof}
	
	By Proposition~\ref{proposition1}, we know that: 
	\begin{align*}
	& \;\;\;\;\, L_{01}^S(g, g_{tc}) - L_{01}^S(g^\ast, g_{tc})\\
	& \le \Pbb_{x\sim \hat{S}}[m(f, x, g_{tc}(x))\le t] -  L_{01}^S(g^\ast, g_{tc})\\
	& \;\;\;\;+ \widetilde{O}\rbr{\frac{\sum_i\sqrt{q}\|W_i\|_F}{t\sqrt{n}} + \sqrt{\frac{\log(1/\delta)+p\log n}{n}}}\\
	& \le \Pbb_{x\sim \hat{S}}[m(f, x, g_{tc}(x))\le t] -  L_{01}^S(g^\ast, g_{tc})\\
	& \;\;\;\;+ \widetilde{O}\rbr{\frac{\sum_i\sqrt{q}\|W_i\|_F}{t\sqrt{n}} + \sqrt{\frac{\log(1/\delta)+p\log n}{n}}}\\
	& \text{(by algorithm (\ref{algorithm_margin}))} \\
	& \le \Pbb_{x\sim \hat{S}}[m(f, x, g_{tc}(x))\le t] -  L_{01}^{\hat{S}}(g^\ast, g_{tc})\\
	& \;\;\;\;+ O\rbr{\sqrt{\frac{\log(1/\delta)}{n}}}\\
	& \;\;\;\;+ \widetilde{O}\rbr{\frac{\sum_i\sqrt{q}\|W_i\|_F}{t\sqrt{n}} + \sqrt{\frac{\log(1/\delta)+p\log n}{n}}}\\
	& \text{(by standard concentration bound)} \\
	& = 2\Delta.
	\end{align*}
	
	By Proposition~\ref{proposition2}, we have 
	\begin{align*}
	R_{\Bcal}(g) & \le \Pbb_{x\sim \half(\hat{S} + \hat{T})}[m_{\Bcal}(f, x) \le t] \\
	& + \widetilde{O}\rbr{\frac{\sum_i\sqrt{q}\|W_i\|_F}{t\sqrt{n}} + \sqrt{\frac{\log(1/\delta)+p\log n}{n}}}\\
	& \le \hat{\mu}. \;\;\text{(by algorithm (\ref{algorithm_margin}))}
	\end{align*}
	
	And the lemma is proved.
	
\end{proof}

\section{Detailed Experimental Settings}
\label{appsec:detail_exp}
In this section, we describe the detailed setting of our experiments.

\subsection{Dataset}
\paragraph{ENTITY-30~\citep{santurkar2021breeds}.} We use the ENTITY-30 dataset from BREEDS~\citep{santurkar2021breeds} to simulate natural subpopulation shift. ENTITY-30 is constructed by data from ImageNet. It consists of 30 superclasses of entities, e.g., insect, carnivore, and passerine, which are the labels of classification task. Each superclass has eight subclasses; for example, the superclass insect has fly, leafhopper, etc., as its subclasses. The dataset is constructed by splitting each superclass's subclasses into two random and disjoint sets and assigning one of them to the source and the other to the target domain. Each subclass has the same probability of being chosen into source and target and has the same number of samples. This ensures the source and target datasets are approximately balanced w.r.t. superclass. To simulate subpopulation shift scenarios, we construct an unsupervised domain adaptation task. We provide labels of superclasses on the source domain and only unlabeled data on the target domain. The goal is to achieve good population accuracy in the target domain. In the randomly generated ENTITY-30 dataset we used for experiments, there are 157487 labeled samples in the source domain and 150341 unlabeled data in the target domain.
\paragraph{Office-31~\citep{saenko2010adapting}.} Office-31 is a standard domain adaptation dataset of three diverse domains, Amazon from Amazon website, Webcam by web camera and DSLR by digital SLR camera with 4,652 images in 31 unbalanced classes.
\paragraph{Office-Home~\citep{venkateswara2017deep}.} Office-Home is a more complex dataset containing 15,500 images from four visually very different domains: Artistic images, Clip Art, Product images, and Real-world images.

\subsection{Adaptation of FixMatch for Subpopulation Shift}\label{appsec:adaptation_fixmatch}
We adapt the state-of-the-art semi-supervised learning method FixMatch~\citep{sohn2020fixmatch} to the subpopulation shift. Unlike semi-supervised learning, where the support sets of unlabeled data and labeled data are inherently the same, the support sets of different domains may disjoint a lot in subpopulation shift problems. To enable label propagation, we need a good feature map to enable label propagation on the \emph{feature space}. Such a feature map should be obtained without the need for labels on the target domain. Under these constraints, we hypothesize that the feature map learned by modern self-supervised learning algorithms helps. Concretely, we use the feature map learned by SwAV~\citep{caron2020unsupervised} which simultaneously \emph{clusters} the data while \emph{enforcing consistency} between cluster assignments produced for different augmentations of the same image. This representation has two merits; first, it encourages subpopulations with similar representations to cluster in the feature space; second, it enforces the augmented samples to be close in the feature space. We expect that subclasses from the same superclass will be assigned to the same cluster and thus overlap in the feature space.

Our adaptation of FixMatch has the following pipeline: 
\begin{itemize}
	\item Step 1: We first finetune a ResNet50 model with pre-trained SwAV representation\footnote{Since pretraining from scratch requires much computation resource, we simply take the officially released checkpoint from \url{https://github.com/facebookresearch/swav}. Note this representation is learned on \emph{unlabeled} ImageNet training set, a superset of ENTITY-30 training set. There is no leakage of label information on the target domain.} on the source domain; 
	\item Step 2: Then we use this model as the base classifier and further finetune it with the objective function of FixMatch, i.e., supervised loss on weak augmentations of source samples plus consistency regularization, which encourages the prediction of the classifier on strong augmentations of a sample to be same to the prediction on weak augmentations of the sample\footnote{Empirically, FixMatch also combines self-training techniques that take the hard label of the prediction on weak augmentations and soft label for strong augmentations. We also use Distribution Alignment extension mentioned in Section 2.5 in FixMatch paper~\citep{sohn2020fixmatch}.}.
\end{itemize}

\subsection{Hyperparameter Settings and Training Details}
\subsubsection{Subpopulation Shift Datasets}
We evaluate four methods, i.e., Training only on the Source Domain (we use TSD for acronym), FixMatch, DANN~\citep{ganin2016domain}, MDD~\citep{zhang2019bridging}. For Step 1 of FixMatch mentioned in Section~\ref{appsec:adaptation_fixmatch}, we simply take the model training only on the source domain. 

We hardly tune the hyperparameter from their default values from the released repos: \url{https://github.com/facebookresearch/swav} for the hyperparameters for finetuning from SwAV, \url{https://github.com/kekmodel/FixMatch-pytorch} for Fixmatch training, \url{https://github.com/thuml/Transfer-Learning-Library} for DANN and MDD.

We train all models for 30 epochs using SGD (FixMatch is finetuned from TSD for 30 epochs). Each configuration is evaluated with 3 different random seeds, and the mean and standard deviation are reported. We follow the configuration of \texttt{eval\_semisup.py} in \url{https://github.com/facebookresearch/swav/blob/master/eval_semisup.py} for TSD and FixMatch with some scaling of learning rate together with batch size and further take the hyperparameters of FixMatch from \url{https://github.com/kekmodel/FixMatch-pytorch}. For TSD: we use an initial learning rate of $0.4$ for the last linear layer and $0.02$ for other layers; we decay the learning rate by 10 at epoch 15 and 22; we train on 4 NVIDIA RTX 2080 Ti GPUs with 64 samples on each GPU. For FixMatch, we use an initial learning rate of $0.1$ for the last linear layer and $0.005$ for other layers; we use a cosine learning rate decay; we train on 4 NVIDIA RTX 2080 Ti GPUs with 16 labeled data from the source domain and $3\times 16$ (set the hyperparameter $\mu$ in FixMatch to 3, whose default value is 7, due to the limitation of GPU memory of RTX 2080 Ti) unlabeled data from target domain; we use Distribution Alignment~\citep{berthelot2019remixmatch} extension from the FixMatch paper~\citep{sohn2020fixmatch} (Section 2.5)\footnote{As mentioned in the FixMatch paper, this extension encourages the model predictions to have the same class distribution as the labeled set, and their results show that this extension is effective when the number of labeled data is small. We find this extension is also helpful in our subpopulation shift setting (improves the accuracy from $68.5\%$ to $72.6\%$). We hypothesize that this is because distribution alignment helps to learn a suitable representation on which the separation and expansion of subpopulation are well-satisfied so that label information can propagate.}; we select the parameter $\lambda_u$ of FixMatch (the coefficient of the consistency loss) from $\cbr{1, 10, 100}$ and use 10 for our experiments; the threshold hyperparameter $\tau$ is set to the default value 0.95. Since DANN and MDD are already algorithms for unsupervised domain adaptation, we directly use the default hyperparameters from \url{https://github.com/thuml/Transfer-Learning-Library} for DANN and MDD. We train each DANN and MDD model on a single NVIDIA RTX 2080 Ti GPU as the original code does not support multi-GPU training, and we just keep it.

In all experiments, we use Pytorch 1.7.1 with CUDA version 10.1. For all optimizers, we use SGD with the Nesterov momentum 0.9 and weight decay 5e-4. For TSD and FixMatch, we further use NVIDIA's apex library to enable mixed-precision training (with optimization level O1).

\subsubsection{Classic Unsupervised Domain Adaptation Datasets}
We train MDD models on Office-31 and Office-Home datasets following the configuration in \url{https://github.com/thuml/Transfer-Learning-Library}. Then we finetune the learned model using FixMatch (with Distribution Alignment extension) for 20 epochs. We do not tune any hyperparameters but directly use the same learning rate scale and learning rate scheduler as MDD, i.e., the batch size is 64, the initial learning rate is 0.008 for the last layers while is 0.0008 for the backbone feature extractor (ResNet50)\footnote{The default batch size and initial learning rate of MDD are 32 and 0.004, we simultaneously scale them by a factor of 2 for using parallel computation.}, the learning rate at step $i$ follows the schedule $\text{lr} = \text{initial lr} \times (1+0.0002 i)^{-0.75}$. The hyperparameters of FixMatch is set as $\mu = 3$ and $\lambda_u = 1$.

In all experiments, we use Pytorch 1.7.1 with CUDA version 10.1. For all optimizers, we use SGD with the Nesterov momentum 0.9 and weight decay 5e-4. For FixMatch finetuning, we further use NVIDIA's apex library to enable mixed-precision training (with optimization level O1).

\section{Other Related Works}
\label{appsec:other_related}
There are many works designing algorithms based on the idea of distributional matching~\citep{adel2017unsupervised,becker2013non,pei2018multi,jhuo2012robust,hoffman2018cycada,zhao2019multi,long2017deep}. We refer the readers to \citet{zhang2019transfer,zhuang2020comprehensive,zhao2020review} for comprehensive surveys.

Domain generalization is a fundamental extension of domain adaptation; the distinction to domain adaptation is made precisely in, e.g.,~\citet{gulrajani2020search}. Most domain generalization methods aim to incorporate the invariances across all training datasets instead of only being invariant to a specific test domain~\citep{ghifary2015domain}. Different types of invariances are leveraged through algorithms like invariant risk minimization or its variants~\citep{arjovsky2019invariant,ahuja2020invariant,parascandolo2020learning,javed2020learning,krueger2020out,mitrovic2020representation}, (group) distributional robust optimization~\citep{sagawa2019distributionally}, and meta-learning algorithms~\citep{li2018learning}. Theoretical understandings on the invariant representation have also been stutied~\citep{zhao2020fundamental}. Other works also study how the inductive bias of models helps to generalize or extrapolate~\citep{xu2021how}.

\end{document}